\documentclass[10pt, conference]{IEEEtran}

\usepackage{lmodern}
\usepackage{amsmath,amssymb,amsfonts}
\usepackage{algorithmic}
\usepackage{graphicx}
\usepackage{textcomp}
\usepackage{multirow}
\usepackage{arydshln}
\usepackage{times}
\usepackage{soul}
\usepackage{caption}
\usepackage[colorlinks=true,citecolor=blue]{hyperref}
\usepackage{adjustbox}
\usepackage{hhline}
\usepackage{booktabs}
\usepackage{makecell}
\usepackage{microtype}
\usepackage{subfigure}
\urlstyle{rm} 
  
\usepackage{balance}

\usepackage[capitalise]{cleveref}

\usepackage[T1]{fontenc}
\usepackage{inconsolata}
 
\usepackage{amsthm}
\usepackage{bm}
\usepackage{float}
\usepackage{booktabs}
\usepackage[utf8]{inputenc}
\usepackage[ruled,linesnumbered]{algorithm2e}
\usepackage[dvipsnames]{xcolor}

\newtheorem{theorem}{Theorem}
\newtheorem{lemma}{Lemma}

\SetKwInput{KwInput}{Input}                
\SetKwInput{KwOutput}{Output}  

\definecolor{commentcolor}{RGB}{110,154,155}  
  
\newcommand{\PyCode}[1]{\ttfamily\textcolor{black}{#1}} 

\begin{document}

\title{\emph{Calibre}: Towards Fair and Accurate Personalized Federated Learning with Self-Supervised Learning}

\author{\IEEEauthorblockN{Sijia Chen, Ningxin Su, Baochun Li}
\IEEEauthorblockA{\textit{Department of Electrical and Computer Engineering} \\
\textit{University of Toronto}\\
Toronto, Ontario, Canada \\
\tt\small sjia.chen@mail.utoronto.ca, \tt\small ningxin.su@mail.utoronto.ca, \tt\small bli@ece.toronto.edu}
}

\maketitle

\begin{abstract}

In the context of personalized federated learning, existing approaches train a global model to extract transferable representations, based on which any client could train personalized models with a limited number of data samples. Self-supervised learning is considered a promising direction as the global model it produces is generic and facilitates personalization for all clients fairly. However, when data is heterogeneous across clients, the global model trained using SSL is unable to learn high-quality personalized models. In this paper, we show that when the global model is trained with SSL without modifications, its produced representations have fuzzy class boundaries. As a result, personalized learning within each client produces models with low accuracy. In order to improve SSL towards better accuracy without sacrificing its advantage in fairness, we propose \emph{Calibre}, a new personalized federated learning framework designed to calibrate SSL representations by maintaining a suitable balance between more generic and more client-specific representations. \emph{Calibre} is designed based on theoretically-sound properties, and introduces (1) a client-specific prototype loss as an auxiliary training objective; and (2) an aggregation algorithm guided by such prototypes across clients. Our experimental results in an extensive array of non-i.i.d.~settings show that \emph{Calibre} achieves state-of-the-art performance in terms of both mean accuracy and fairness across clients. Code repo: \href{https://github.com/TL-System/plato/tree/main/examples/ssl/calibre}{calibre}.

\end{abstract}

\begin{IEEEkeywords}
Personalized federated learning, self-supervised learning, model fairness, prototype learning
\end{IEEEkeywords}

\section{Introduction}

With federated learning (FL)~\cite{fedavg-aistats17}, multiple clients collaboratively train a global model while keeping their local datasets private. However, with data heterogeneity, the local accuracies that clients achieve may diverge significantly after training the same model on their different local datasets. This leads to the failure of achieving a more uniform --- or \emph{fair} --- distribution of client test accuracies \cite{qffl-iclr20}. Known as \textit{model unfairness}, this challenge motivated existing research on \textit{personalized federated learning}~\cite{tpfl-tnnls22}, which focused on training a global model for each client to use as its starting point when training its personalized model. Such personalized models are better aligned with local data and may therefore improve fairness.

Unfortunately, when the local data distributions across clients are severely non-independent and identically distributed (non-i.i.d.), it remains challenging to improve model fairness across clients \emph{while maintaining} high overall performance. For example, while fine-tuning a trained global model within each client can improve the mean client test accuracy, it often leads to a higher variance, resulting in unfairness. This discrepancy primarily stems from significant variations in the local data distribution of individual clients, when compared to the global data distribution across all clients. In the context of data heterogeneity with non-i.i.d.~data across clients, it would be ideal to achieve both low variance to improve fairness in local test accuracies across clients, while simultaneously maximizing the overall mean accuracy.

To achieve both objectives, it has been shown in the recent literature that personalized model training needs high-quality representations of samples as its starting point. Therefore, existing research \cite{lgfedavg-nips19, fedmvt-ijcai20, fedrep-icml21, fedbabu-iclr22} attempted to train an encoder as the global model, which is capable of capturing generic representations from the underlying non-i.i.d.~data. However, training models with these existing mechanisms depend on label information. Consequently, when some classes or labels are over-represented in the data from certain clients, model training may become biased towards these majority classes. This can result in limited representations that are hard to generalize well across all clients, exacerbating the issue of unfairness. Furthermore, the overall performance of these existing mechanisms is also significantly reduced if clients do not have an adequate number of labeled samples.

In this paper, we argue that utilizing self-supervised learning (SSL) as a training approach for the global model is an effective solution to these issues. SSL allows the global model to be trained in an unsupervised manner without the need for labels, thereby mitigating the issues related to label skewness. More importantly, with the objective function of SSL, the global model is trained to extract invariant features across clients. With transferable representations obtained from SSL training as a starting point, each client can then train a high-quality personalized model, even if the number of samples is limited. Although recent efforts in the literature, such as FedEMA~\cite{fedema-iclr22}, explored the usage of SSL in conventional federated learning, how SSL can be used for \emph{personalized} federated learning remains uncharted territory.

In this paper, we propose a new framework, referred to as \emph{Calibre}, that employs self-supervised learning to train global models with a focus on two fundamental objectives of personalization: the best possible \emph{fairness} across all clients, and optimal mean client test accuracies. \emph{Calibre} contains two federated learning stages: the \emph{training} stage that trains a global model using SSL, and the \emph{personalization} stage that allows clients to utilize the global model as the feature extractor to train personalized models.

From empirical experiments, we observed that even though training a global model with SSL contributes to fairness to some extent, the quality of the produced personalized model is poor. More specifically, the test accuracy of this personalized model is even lower than the accuracy of a local model trained without relying on the global model. Furthermore, within our proposed framework that facilitates fair comparisons, we conducted experiments involving other recent works \cite{fedema-iclr22, fedu-iccv21, fedca-edgesys20} under non-i.i.d.~data. Surprisingly, none of these approaches achieved competitive performance in personalized models. To tackle this issue, we conducted a qualitative analysis of SSL representations to gain better insights. First, we observed that the learned SSL representations collected from different clients were mixed without presenting meaningful and discriminative information. For instance, no distinct clusters emerged in the representations, even when they originated from the same class. Second, within each client, SSL representations exhibit unclear class boundaries, resulting in poorer class separation, which is essential for subsequent personalized model learning. 

We naturally wonder: how can SSL representations be calibrated to improve personalization while preserving their fairness guarantee?
Towards answering this question, we attempt to build theoretical insights from an information theoretic perspective \cite{metamo-iclr20}, with the objective of exposing how personalized learning depends on the global model and the local dataset. On the one hand, once the global model is trained to solely capture transferable SSL representations, it will have a limited ability to extract client-specific information, such as clusters of the local dataset, necessary for personalization. On the other hand, a global model unable to capture generic representations will contain sparse information from the datasets of whole clients, making only information from the local dataset contribute to personalized learning. To obtain balanced information usage during personalization, we formulate this process as an optimization problem, and derive important theoretical results to analyze the \emph{generality-personalization tradeoff}.

Taking advantage of our theoretical insights, the core contribution of this paper is to pursue fair and accurate personalized FL by calibrating the SSL representations with a new contrastive prototype adaptation mechanism. During the local update of each client, client-specific prototype regularizers can work seamlessly with any SSL approach to optimize the global model toward capturing generic and clustered representations. Each client then computes the average distance between its samples and their corresponding prototypes. Such average distance can be effectively used to measure the local divergence rate, which acts as a weighting factor during the server aggregation. Through an extensive array of experiments conducted across various non-i.i.d.~settings using the \texttt{CIFAR-10}, \texttt{CIFAR-100}, \texttt{STL-10} \cite{stl10-aistats11} datasets, we illustrate that the utilization of a lightweight personalized model, specifically a linear classifier, would be sufficient for \emph{Calibre} to achieve state-of-the-art performance in terms of both mean accuracy and fairness. In addition, \emph{Calibre} also generalizes well to unseen clients that have not participated in the training process.

\section{Related Work}
\label{sec:related}

Existing research in the FL literature has shown that the quality of the global model deteriorates when clients across the board have non-i.i.d. data. \textit{Personalized federated learning} (pFL) \cite{tpfl-tnnls22, fedamp-aaai21} was proposed with the target of training personalized models for individual clients while maintaining fairness across clients, in that the variance of local test accuracies is low. One of the primary research directions \cite{lgfedavg-nips19, fedrep-icml21, fedbabu-iclr22, fedper-arxiv19} exploits representation learning to train a global model capable of extracting transferable representations. With this model as the starting point, each client can train a high-quality personalized model. Particularly, FedRep \cite{fedrep-icml21} jointly learns a single global representation and many local heads. FedBABU \cite{fedbabu-iclr22} shares a similar two-stage training approach to our work. In the first stage, it trains a shared encoder using decentralized datasets, after which each client optimizes its local head by leveraging the features extracted from the fixed encoder. In contrast to approaches that heavily rely on strong supervision information, such as labels, our work takes a different approach by incorporating self-supervised learning (SSL) \cite{revisitssl-CVPR19} into personalized FL. By leveraging SSL, we enhance the generality of the global model without the need of using labels in training data samples.

Exploring unsupervised training methods for addressing non-i.i.d.~challenges in federated learning is a burgeoning research area. Existing studies that were closely related to our work, such as \cite{fedema-iclr22, fedu-iccv21, fedca-edgesys20}, employed SSL frameworks, including BYOL \cite{byol-nips20}, SimCLR \cite{simclr-icml20}, and Simsiam \cite{simsiam-cvpr21}. The overarching objective was to optimize the global model using multiple augmentations or views of the same input data. Specifically, FedEMA \cite{fedema-iclr22} conducted a comprehensive empirical investigation into federated self-supervised learning. Based on this study, FedEMA introduced a novel approach that combined elements of BYOL and employed an exponential moving average (EMA) scheme. To the best of our knowledge, no prior research explored the effectiveness of self-supervised learning in the context of personalized FL, while simultaneously considering fairness and overall performance. 

\section{Problem Formulation}
\label{sec:probstate}

Unlike conventional federated learning that trains a global model $\bm{\theta}$ across $C$ clients, personalized federated learning (pFL) aims to produce personalized models $\left\{\bm{\phi}^c\right\}_{c=1}^C$ for individual clients. This leads to an optimization problem given by $\min_{\left\{\bm{\phi}^c\right\}_{c=1}^C} \frac{1}{C} \sum_c L^c \left(\bm{\phi}^c; D^{c}\right)$, where $L^c$ and $D^{c}$ is the loss function and the local train set of client $c$, respectively. Among various methodologies toward achieving personalized FL, this paper focuses on the paradigm \cite{fedrep-icml21,fedbabu-iclr22,fedema-iclr22} in which clients cooperatively train a global model $\bm{\theta}$, containing fully convolutional layers $\bm{\theta}_b$ and fully-connected layers $\bm{\theta}_h$. After reaching convergence, each client utilizes $\bm{\theta}_b$ to extract features to train its $\bm{\phi}$ based on the local dataset. For the purpose of evaluation, each client tests its trained $\bm{\phi}^c$ on the local test dataset $D^{\prime c}$. In the context of image classification that we focus on, clients compute accuracies $\left\{a^1,...,a^C\right\}$. This paper focuses on a common challenge in non-i.i.d.~data scenarios, particularly where the label distributions vary considerably across clients.

\subsection{Model Fairness and Overall Performance}
\label[type]{sec: fairness}

Mean accuracy computed on $\left\{a^1,...,a^C\right\}$ presents the overall performance but fails to give insights into how well individual clients can train personalized models. Under the non-i.i.d.~data of clients, the global model $\bm{\theta}$ may be trained to be biased towards the data distribution of certain clients. Therefore, when $\bm{\theta}$ is eventually used by clients for personalization under non-i.i.d.~data, such bias introduces highly variable performance between different clients as the trained global model cannot generalize well to some local datasets.

This disparity in model performance across different clients is commonly recognized as model unfairness \cite{ditto-icml21}. We formally extend this to personalized FL. Specifically, \textit{fairness} is defined as the case if, based on the trained global model, clients can generate personalized models with similar performance. In terms of accuracy, this leads to $\left\{a^1, \ldots,a^C\right\}$ of clients presenting a low divergence, meaning that the variance computed from these test accuracies is low. 

We argue that targeting a high overall performance but sacrificing model fairness, or vice versa, is extremely detrimental when applying personalized FL to real-world applications. Therefore, in this paper, our objective is to impose better fairness by decreasing the accuracy variance, while still achieving the best possible mean accuracy.

\subsection{Personalized FL with Self-Supervised Learning}

As pointed out by the existing literature \cite{fedrep-icml21,fedbabu-iclr22}, the core idea to improve fairness is to train the global model toward extracting the generic features containing common patterns of the dataset. And thus, any clients can train personalized models based on these extracted features of local samples. Motivated by these observations, we propose introducing self-supervised learning (SSL), an unsupervised learning approach, to train the global model under personalized FL, pFL-SSL. This is because the objective of SSL does not include labels, and as a result, learned representation is not tied to specific labeled outcomes. This generalizes well to local samples with different label distributions; with non-i.i.d.~data, the trained global model will be less biased towards the data distribution of a certain subset of clients.

Therefore, the preliminary design is to train the global model with SSL and then perform personalization on each individual client based on the transferable representation learned by the trained global model. First, during the \emph{training stage}, the global model $\bm{\theta}$ is trained with the loss function of SSL \cite{simclr-icml20} till reaching the convergence. Subsequently, during the \emph{personalization stage}, each client utilizes $\bm{\theta}_b$ of the $\bm{\theta}$ as the feature extractor of the local samples and then $\bm{\phi}^c$ is trained in a supervised manner with the cross entropy loss. As the $\bm{\theta}_b$ is trained to extract generic representations, the $\bm{\phi}^c$ is designed to be lightweight, such as a linear classifier.

Such a pFL-SSL design ensures compatibility with a wide range of state-of-the-art SSL methods. One only needs to change the SSL method in the training stage to obtain a new approach. For example, one can directly implement pFL-BYOL, pFL-SimCLR, pFL-SimSiam, and pFL-MoCoV2 by introducing BYOL \cite{byol-nips20}, SimCLR \cite{simclr-icml20}, SimSiam \cite{simsiam-cvpr21}, and MoCoV2 \cite{moco-cvpr20}, respectively.

\subsection{Representations with Fuzzy Class Boundaries}
\label[type]{prob: fuzzy}

\begin{figure}[t]
    \centering
    \includegraphics[width=0.19\textwidth]{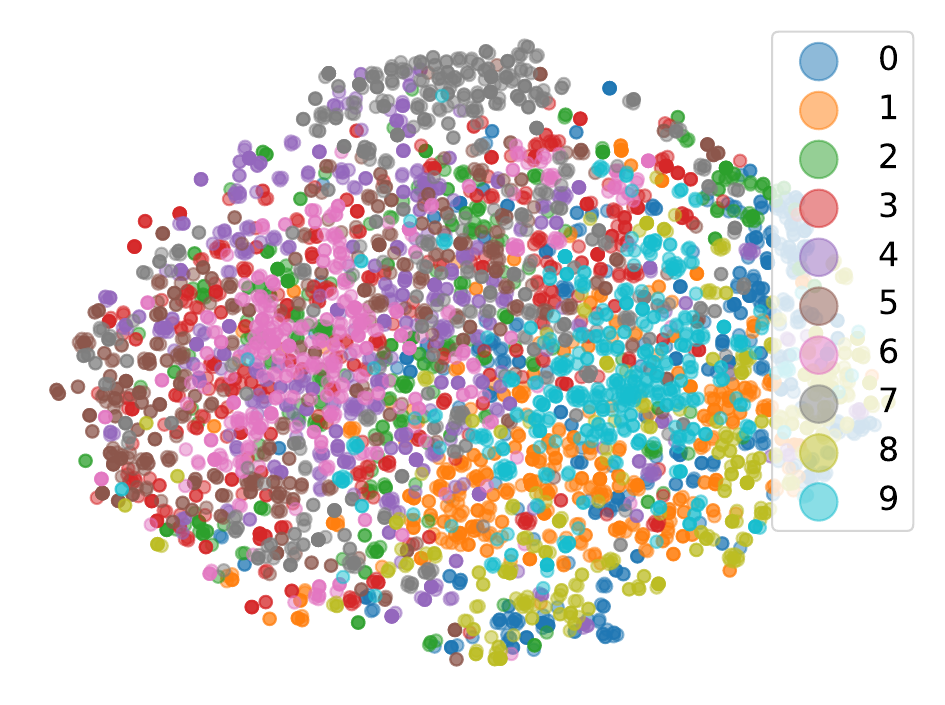}
    \includegraphics[width=0.19\textwidth]{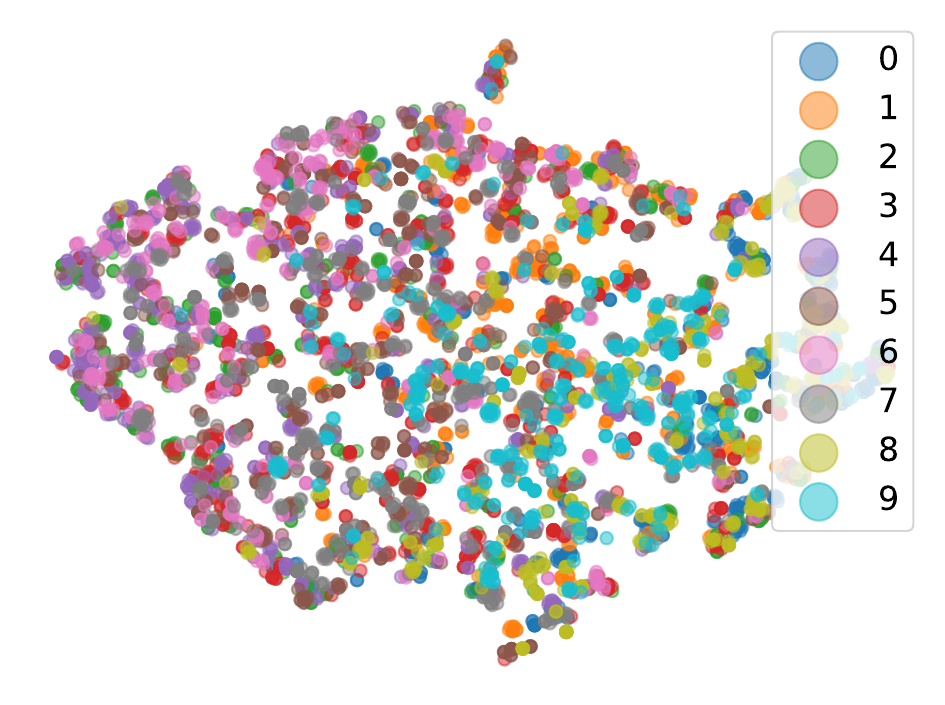}
    \caption{Illustrations of 2D t-SNE embeddings of SSL representations learned by four methods. Moving from left to right, the representations are derived from local samples of 10 out of 100 clients, using encoders trained by pFL-SimCLR and pFL-BYOL respectively.}
    \label{fig: fuzzy0}
    \vspace{-0.2cm}
\end{figure} 

\begin{figure}[t]
    \centering
    \includegraphics[width=0.24\textwidth]{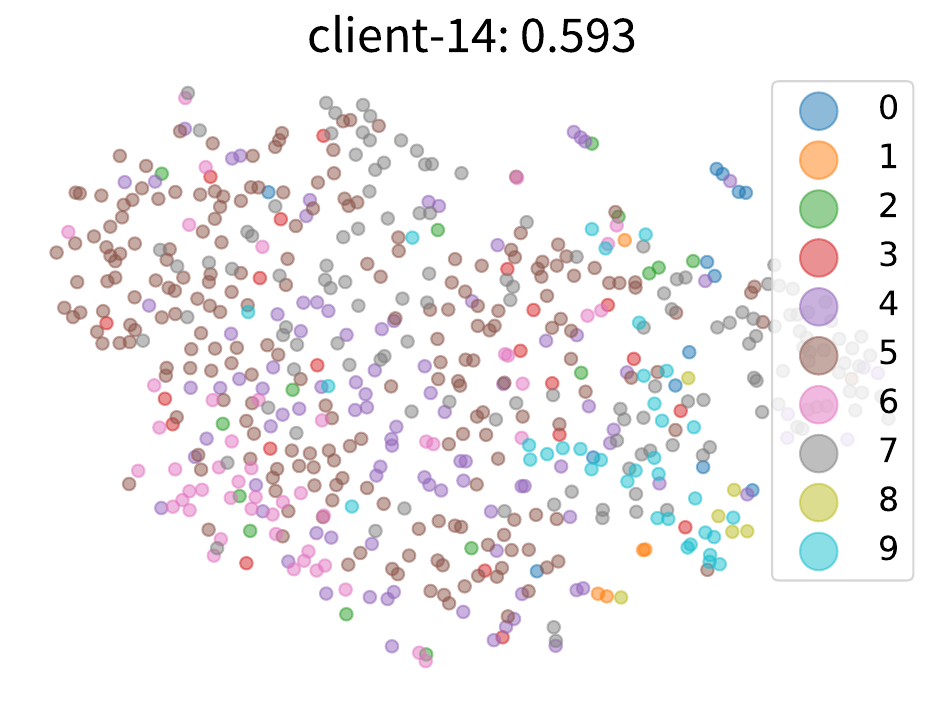}
    \includegraphics[width=0.24\textwidth]{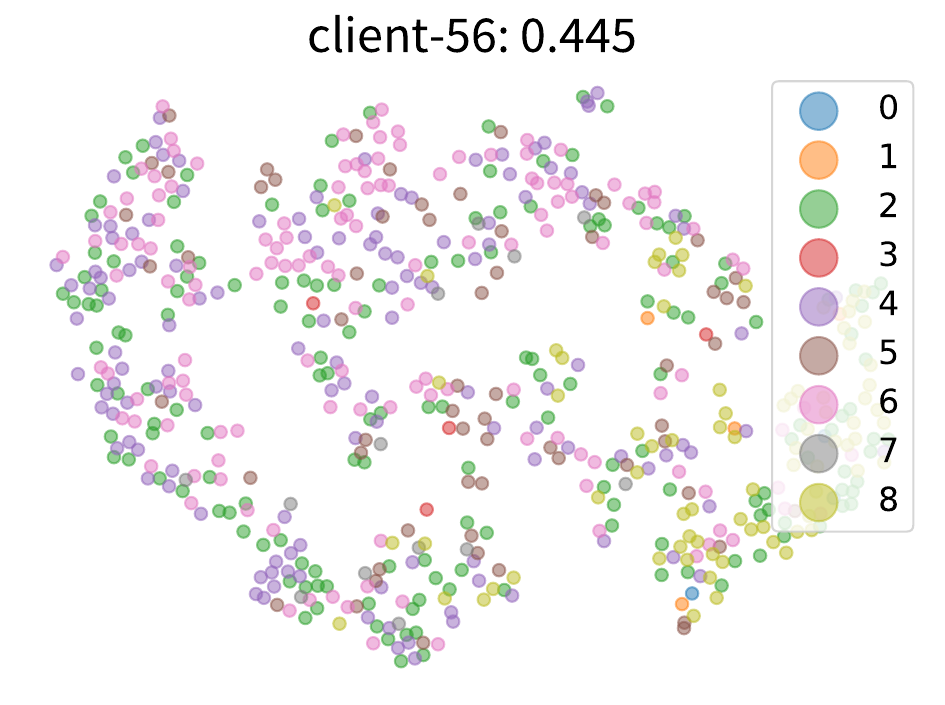}
    \caption{Visualization of 2D t-SNE embeddings of client representations and test accuracy derived from pFL-SimCLR and pFL-BYOL methods. The first three representations are from pFL-SimCLR, while the last three are from pFL-BYOL. These examples are randomly selected from a pool of 100 clients.}
    \label{fig: fuzzy1}
    \vspace{-0.5cm}
\end{figure} 

With pFL-SSL, the transferable representations extracted by $\bm{\theta}$ on local samples are expected to demonstrate two key properties: 1) exhibiting distinct class boundaries, even in the presence of class-imbalanced samples for each client \cite{rethinkssl-nips20}, and 2) encompassing transferable generic semantics. However, based on our experiment performed on \texttt{CIFAR-10} under non-i.i.d.~data across $100$ clients, we observed that the representations learned by pFL-SimCLR and pFL-BYOL, as shown in \cref{fig: fuzzy0} and \cref{fig: fuzzy1}, do not exhibit these properties. In contrast, the 2D embeddings of features reveal the following two limitations of PFL-SSL. 

\noindent \textbf{Fuzzy cluster boundaries across clients}. \cref{fig: fuzzy0} shows the distribution of representations collected from multiple clients. It is obvious that representations of pFL-SimCLR and pFL-BYOL cannot be clustered into distinct groups. This means that these approaches only learn the generic semantics without holding class information. Specifically, the objective of pFL-SimCLR and pFL-BYOL is designed to measure the agreement or discrepancy between different views of the same data. This leads to representations with higher generality, but no cluster information is included. Without such information, the cluster boundaries of representations are fuzzy, and thus, it is hard to perform classification. 

\noindent \textbf{Fuzzy cluster boundaries within each client}. As shown in \cref{fig: fuzzy1}, even within individual clients, representations of pFL-SimCLR and pFL-BYOL fail to shape clear class boundaries and thus cannot embrace better class separation for subsequent personalized learning. In the non-i.i.d.~data, individual clients exhibit class-imbalanced samples, meaning that certain classes have a considerably larger number of samples than others, as depicted in \cref{fig: fuzzy1}. Previous work \cite{rethinkssl-nips20} has highlighted that SSL representations form distinct clusters for class-imbalanced samples under central learning. However, the observations from pFL-SimCLR and pFL-BYOL demonstrate that SSL representations fail to maintain such a benefit under the personalized FL with non-i.i.d. Specifically, the representations do not exhibit clear boundaries that separate samples from different classes.

Hence, we argue that pFL-SSL enables training a global model that captures transferable representations holding generic patterns. Consequently, the personalized learning of individual clients can benefit equally from these representations. This guarantees the model fairness. However, as cluster information and clear boundaries are not contained in SSL representations, clients are unable to train high-quality personalized models. As a result, the overall performance tends to be low. 

Finally, our question in this paper is how we should include cluster information when training the global model with SSL. Thus, SSL representations are calibrated to achieve accurate personalized FL while maintaining its capability in fairness guarantees.

\section{\emph{Calibre}: a New pFL Framework}
\label{sec:calibre}

In this section, we begin with a theoretical exploration of the impact of self-supervised representations containing fuzzy class boundaries on personalization. As a result, the derived theorem guides our design of \emph{Calibre} towards fair and accurate personalized FL. 

To train the personalized model $\bm{\phi}$, the $\bm{\theta}_b$ extracts features from the input $\bm{x}^{\prime}$, resulting in the latent representation $\overline{\bm{z}}=f_{\bm{\theta}_b}\left(\bm{x}^{\prime}\right)$, which is utilized to make a prediction with $\bm{\phi}$, denoted as $\overline{y}^{\prime}$. Therefore, the predicted label $\overline{y}^{\prime}$ is obtained based on information derived from: 1) the local dataset $D$, where $\bm{x}^{\prime} \in D$ is processed by models to produce $\overline{y}^{\prime}$; and 2) $\bm{\theta}_b$, the encoder responsible for extracting latent features, directly contributing to $\overline{y}^{\prime}$.
    
\subsection{Generality-Personalization Tradeoff Theorem}
The learning for the personalized model $\bm{\phi}$ of each client depends on the self-supervised representations $\overline{\bm{z}}$ extracted by the global $\bm{\theta}_b$. We denote this process as $\bm{x}^{\prime}\rightarrow \overline{\bm{z}} \rightarrow \overline{y}^{\prime}$. Thus, from the information theory perspective, this process can be formulated by two information flows, including $\bm{x}^{\prime} \in D \rightarrow \bm{\phi} \rightarrow \overline{y}^{\prime}$ and $\bm{\theta}_b \rightarrow \overline{y}^{\prime}$.
 
\textbf{Excessively generic representations.} On the one hand, when self-supervised representations $\bm{z}$ primarily exhibit generic features across clients, such as sample-wise similarity, the training for $\bm{\phi}$ is adversely affected as a result of failing to capture the class distinguishability information from client-specific data $D$, as witnessed in \cref{fig: fuzzy1}. This can be denoted as $q(\overline{y}^{\prime}|\bm{x}^{\prime},\bm{\phi})q(\bm{\phi}|D,\bm{\theta}_b)=q(\overline{y}^{\prime}|\bm{x}^{\prime},\bm{\phi})q(\bm{\phi}|\bm{\theta}_b)$, in which the personalized model $\bm{\phi}$ is independent of the dataset $D$ in the client, such that $q(\overline{y}^{\prime}|\bm{x}^{\prime},\bm{\theta}_b,D)=q(\overline{y}^{\prime}|\bm{x}^{\prime},\bm{\theta}_b)$. Hence, we have Lemma \ref{bounded-yd} which improves personalization by maximizing the information from $D$ to the prediction.

\begin{lemma}
\label{bounded-yd}
Given $\bm{\theta}_b$ and the learned representation $\overline{\bm{z}}$, we can implicitly encourage the model to use the client's local training data $D$ by maximizing the lower bound of $I(\overline{y}^{\prime};D|\bm{\theta}_b,\overline{\bm{z}})$ shown as $I(\bm{x}^{\prime};\overline{y}^{\prime}|\bm{\theta}_b) - E\left[KL(p(\overline{z}|\bm{x}^{\prime}, \bm{\theta}_b)||r(\overline{z}))\right]$, where $r(\overline{z})$ is a variational approximation to the marginal.
\end{lemma}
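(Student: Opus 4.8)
The plan is to establish the bound with a variational information--bottleneck argument, separating it into (i) a chain-rule manipulation that reduces the target conditional mutual information to terms involving only $\bm{x}^{\prime}$, $\overline{\bm{z}}$ and $\overline{y}^{\prime}$, and (ii) a variational step that replaces the intractable marginal of $\overline{z}$ by the tractable surrogate $r(\overline{z})$. Throughout I would exploit the modeling assumptions stated above the lemma: that $\bm{x}^{\prime}\in D$, that $\overline{\bm{z}}$ is produced from $\bm{x}^{\prime}$ by $\bm{\theta}_b$, and that the prediction follows the flow $\bm{x}^{\prime}\in D\to\bm{\phi}\to\overline{y}^{\prime}$.

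For step (i), since $\bm{x}^{\prime}$ is a function of $D$ (as $\bm{x}^{\prime}\in D$), the chain rule gives $I(\overline{y}^{\prime};D\mid\bm{\theta}_b,\overline{\bm{z}})=I(\overline{y}^{\prime};\bm{x}^{\prime}\mid\bm{\theta}_b,\overline{\bm{z}})+I(\overline{y}^{\prime};D\mid\bm{x}^{\prime},\bm{\theta}_b,\overline{\bm{z}})$, and nonnegativity of the second term yields $I(\overline{y}^{\prime};D\mid\bm{\theta}_b,\overline{\bm{z}})\ge I(\overline{y}^{\prime};\bm{x}^{\prime}\mid\bm{\theta}_b,\overline{\bm{z}})$. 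I would then rewrite the right-hand side using the identity obtained by expanding $I(\bm{x}^{\prime};\overline{y}^{\prime},\overline{\bm{z}}\mid\bm{\theta}_b)$ in two orders, namely $I(\bm{x}^{\prime};\overline{y}^{\prime}\mid\bm{\theta}_b)-I(\bm{x}^{\prime};\overline{\bm{z}}\mid\bm{\theta}_b)=I(\bm{x}^{\prime};\overline{y}^{\prime}\mid\overline{\bm{z}},\bm{\theta}_b)-I(\bm{x}^{\prime};\overline{\bm{z}}\mid\overline{y}^{\prime},\bm{\theta}_b)$. Dropping the nonnegative term $I(\bm{x}^{\prime};\overline{\bm{z}}\mid\overline{y}^{\prime},\bm{\theta}_b)$ gives $I(\overline{y}^{\prime};\bm{x}^{\prime}\mid\bm{\theta}_b,\overline{\bm{z}})\ge I(\bm{x}^{\prime};\overline{y}^{\prime}\mid\bm{\theta}_b)-I(\bm{x}^{\prime};\overline{\bm{z}}\mid\bm{\theta}_b)$, so that $I(\overline{y}^{\prime};D\mid\bm{\theta}_b,\overline{\bm{z}})\ge I(\bm{x}^{\prime};\overline{y}^{\prime}\mid\bm{\theta}_b)-I(\bm{x}^{\prime};\overline{\bm{z}}\mid\bm{\theta}_b)$.

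For step (ii), I would bound the compression term $I(\bm{x}^{\prime};\overline{\bm{z}}\mid\bm{\theta}_b)$ from above. Writing it as $E[KL(p(\overline{z}\mid\bm{x}^{\prime},\bm{\theta}_b)||p(\overline{z}\mid\bm{\theta}_b))]$ with the true conditional marginal $p(\overline{z}\mid\bm{\theta}_b)$, I would introduce $r(\overline{z})$ and use the identity $E[KL(p(\overline{z}\mid\bm{x}^{\prime},\bm{\theta}_b)||r(\overline{z}))]-I(\bm{x}^{\prime};\overline{\bm{z}}\mid\bm{\theta}_b)=KL(p(\overline{z}\mid\bm{\theta}_b)||r(\overline{z}))\ge 0$, whose nonnegativity is exactly the nonnegativity of a KL divergence. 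Hence $I(\bm{x}^{\prime};\overline{\bm{z}}\mid\bm{\theta}_b)\le E[KL(p(\overline{z}\mid\bm{x}^{\prime},\bm{\theta}_b)||r(\overline{z}))]$. Substituting this upper bound, with its sign reversed, into the inequality from step (i) reproduces exactly the stated lower bound.

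The step I expect to be the main obstacle is the very first inequality, $I(\overline{y}^{\prime};D\mid\bm{\theta}_b,\overline{\bm{z}})\ge I(\overline{y}^{\prime};\bm{x}^{\prime}\mid\bm{\theta}_b,\overline{\bm{z}})$: intuitively it says that predicting from the whole local dataset carries at least as much information as predicting from the single query, but making it rigorous requires fixing the joint distribution so that $\bm{x}^{\prime}$ is genuinely measurable with respect to $D$ rather than an independently drawn test point. Once the generative story of $(D,\bm{x}^{\prime},\bm{\theta}_b,\overline{\bm{z}},\bm{\phi},\overline{y}^{\prime})$ is fixed consistently with the flow $\bm{x}^{\prime}\in D\to\bm{\phi}\to\overline{y}^{\prime}$, the remaining manipulations are routine chain-rule identities together with the nonnegativity of conditional mutual information and of KL divergence.
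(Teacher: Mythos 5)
Your argument is correct and, in its skeleton, coincides with the paper's own derivation, which runs through exactly the chain
\begin{align*}
I(\overline{y}^{\prime};D|\bm{\theta}_b,\overline{\bm{z}}) &\geq I(\bm{x}^{\prime};\overline{y}^{\prime}|\bm{\theta}_b,\overline{\bm{z}}) \geq I(\bm{x}^{\prime};\overline{y}^{\prime}|\bm{\theta}_b)-I(\bm{x}^{\prime};\overline{\bm{z}}|\bm{\theta}_b)\\
&\geq I(\bm{x}^{\prime};\overline{y}^{\prime}|\bm{\theta}_b) - E\left[KL(p(\overline{\bm{z}}|\bm{x}^{\prime}, \bm{\theta}_b)||r(\overline{\bm{z}}))\right],
\end{align*}
followed by the same variational bound on the compression term with gap $E\left[KL(p(\overline{\bm{z}}|\bm{\theta}_b)||r(\overline{\bm{z}}))\right]\geq 0$. (An editorial quirk worth knowing: this chain is printed in the paper inside the proof of Lemma~\ref{bounded-yz}, while the text under Lemma~\ref{bounded-yd} derives the variational classification bound belonging to Lemma~\ref{bounded-yz}; the two printed proofs are effectively swapped.) The one genuinely different move is your justification of the first inequality, which you correctly singled out as the crux. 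You obtain it by taking $\bm{x}^{\prime}$ to be measurable with respect to $D$, but that premise is unavailable in the paper's setting: there $(\bm{x}^{\prime},y^{\prime})\sim D^{\prime c}$ is a fresh test draw, not an element of the training set, as the paper states explicitly. The paper instead extracts the needed zero term $I(\bm{x}^{\prime};\overline{y}^{\prime}|D,\bm{\theta}_b,\overline{\bm{z}})=0$ from the Markov structure, following Appendix A.2 of \cite{metamo-iclr20}: since $\bm{\phi}\sim q(\bm{\phi}|D,\bm{\theta}_b)$ and $\overline{y}^{\prime}$ is produced from $\overline{\bm{z}}$ by $\bm{\phi}$, conditioning on $(D,\bm{\theta}_b,\overline{\bm{z}})$ screens $\overline{y}^{\prime}$ off from $\bm{x}^{\prime}$, and your own chain-rule expansion of $I(\overline{y}^{\prime};D,\bm{x}^{\prime}|\bm{\theta}_b,\overline{\bm{z}})$ in two orders then yields $I(\overline{y}^{\prime};D|\bm{\theta}_b,\overline{\bm{z}})\geq I(\bm{x}^{\prime};\overline{y}^{\prime}|\bm{\theta}_b,\overline{\bm{z}})$ with no assumption that $\bm{x}^{\prime}\in D$. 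So the obstacle you anticipated dissolves via conditional independence rather than measurability; your route buys nothing extra and is the more fragile of the two, though since both premises deliver the identical zero term, your proof is repaired by a one-line substitution. Your remaining steps match the paper exactly: the term you drop, $I(\bm{x}^{\prime};\overline{\bm{z}}|\overline{y}^{\prime},\bm{\theta}_b)$, is what the paper means by its removed term (which it misprints as $I\left(\bm{x}^{\prime}, \bm{\theta}_b|\bm{\theta}_b, \overline{y}^{\prime}\right)$), and your KL-gap identity for the marginal approximation --- correct up to an outer expectation over $\bm{\theta}_b$ --- is a cleaner statement of the paper's final bound via $r(\overline{\bm{z}})$.
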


\begin{proof}
    With $KL(p(\overline{y}^{\prime}|\overline{\bm{z}}, \bm{\theta}_b)|q(\overline{y}^{\prime}|\overline{\bm{z}}, \bm{\theta}_b))\geq 0$, $I(\overline{y}^{\prime};\overline{\bm{z}} |\bm{\theta}_b)$ has a lower bound $\int p(\overline{\bm{z}}, \overline{y}^{\prime}|\bm{\theta}_b)\log{q(\overline{y}^{\prime}|\overline{\bm{z}}, \bm{\theta}_b)}d\overline{y}^{\prime}d\overline{\bm{z}}$ , where $q(\overline{y}^{\prime}|\overline{\bm{z}}, \bm{\theta}_b)$ is a variational approximation to the target distribution $p(\overline{y}^{\prime}|\overline{\bm{z}}, \bm{\theta}_b)$. 
Then, given 
\begin{equation}
    \int p(\overline{\bm{z}}, \overline{y}^{\prime}|\bm{\theta}_b)d\overline{y}^{\prime}d\overline{\bm{z}}=\int p(\bm{x}^{\prime}, \overline{y}^{\prime}| \theta)p(\bm{z}|\bm{x}^{\prime}, \bm{\theta}_b)d\bm{x}^{\prime}d\overline{y}^{\prime}d\overline{\bm{z}}
\end{equation}

we can approximate the lower bound of $I(\overline{y}^{\prime};\overline{\bm{z}}, |\bm{\theta}_b)$ by using the empirical data distribution of $p(\bm{x}^{\prime}, \overline{y}^{\prime})$ on $D^{\prime}$. We obtain $I(\overline{y}^{\prime};\overline{\bm{z}}, |\bm{\theta}_b) \geq E\int p(\overline{\bm{z}}|\bm{x}^{\prime}_n, \bm{\theta}_b)\log q(y_n|\overline{\bm{z}}, \bm{\theta}_b) d\overline{\bm{z}}$. Utilizing the reparameterization trick, this can be estimated as $E_{\bm{x}^{\prime}} E_{\epsilon \sim N(0, I)}\left[\log q(\overline{y}^{\prime}|\bm{x}^{\prime}, \bm{\theta}_b, \epsilon)\right]$.
\end{proof}

\textbf{Excessive personalization.} On the other hand, personalized learning is hindered when the trained global encoder $\bm{\theta}_b$ mainly concentrates on capturing client-specific class distribution from local samples, while providing limited prior generic information learned from datasets of other clients. As a result, with no information from $\bm{\theta}_b$ being included, the training for $\bm{\phi}$ relies solely on the information flow from the $D$. This can be mathematically denoted as $q(\overline{y}^{\prime}|\bm{x}^{\prime},\bm{\phi},\bm{\theta}_b)q(\bm{\phi}|D,\bm{\theta}_b)=q(\overline{y}^{\prime}|\bm{x}^{\prime},\bm{\phi})q(\bm{\phi}|D)$, in which $\bm{\phi}$ only depends on limited local samples $D$, thus significantly damaging the training speed and the prediction performance. Therefore, we should enhance the generic of self-supervised representations by improving the mutual information $I(\overline{y}^{\prime};\overline{\bm{z}} |\bm{\theta}_b)$. Thus, we have lemma \ref{bounded-yz}.

\begin{lemma}
\label{bounded-yz}
With the representation $\overline{\bm{z}}$ obtained by applying the $\bm{\theta}_b$ on the sample $\bm{x}^{\prime}$, more information of $\bm{\theta}_b$ can be given to personalized prediction $\overline{y}^{\prime}$ by maximizing the mutual information $I(\overline{y}^{\prime};\overline{\bm{z}} |\bm{\theta}_b)$ with a lower bound estimated as $E_{\bm{x}^{\prime}, y^{\prime}}\int p(\overline{\bm{z}}|\bm{x}^{\prime}, \bm{\theta}_b)\log p(\overline{y}^{\prime}|\overline{\bm{z}}, \bm{\theta}_b) d\overline{z}$.
\end{lemma}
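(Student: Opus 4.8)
The plan is to follow the same variational information-bottleneck template used for Lemma~\ref{bounded-yd}, but applied directly to $I(\overline{y}^{\prime};\overline{\bm{z}}\,|\,\bm{\theta}_b)$ and, because the target bound in the statement is written with the true posterior $p(\overline{y}^{\prime}\,|\,\overline{\bm{z}},\bm{\theta}_b)$ rather than a surrogate, without introducing any variational $q$. First I would expand the conditional mutual information through its entropy decomposition, $I(\overline{y}^{\prime};\overline{\bm{z}}\,|\,\bm{\theta}_b)=H(\overline{y}^{\prime}\,|\,\bm{\theta}_b)-H(\overline{y}^{\prime}\,|\,\overline{\bm{z}},\bm{\theta}_b)$, and rewrite the negative conditional entropy as $-H(\overline{y}^{\prime}\,|\,\overline{\bm{z}},\bm{\theta}_b)=\int p(\overline{\bm{z}},\overline{y}^{\prime}\,|\,\bm{\theta}_b)\log p(\overline{y}^{\prime}\,|\,\overline{\bm{z}},\bm{\theta}_b)\,d\overline{y}^{\prime}d\overline{\bm{z}}$.

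Second, since $\overline{y}^{\prime}$ is a discrete classification label, the marginal conditional entropy satisfies $H(\overline{y}^{\prime}\,|\,\bm{\theta}_b)\ge 0$. Dropping this non-negative term gives the one-sided bound $I(\overline{y}^{\prime};\overline{\bm{z}}\,|\,\bm{\theta}_b)\ge \int p(\overline{\bm{z}},\overline{y}^{\prime}\,|\,\bm{\theta}_b)\log p(\overline{y}^{\prime}\,|\,\overline{\bm{z}},\bm{\theta}_b)\,d\overline{y}^{\prime}d\overline{\bm{z}}$, which already has the desired shape. This is the key structural simplification relative to Lemma~\ref{bounded-yd}: there the surrogate $q$ absorbs the intractable posterior and forces a reparameterization step, whereas here the statement keeps $p$ and so the bound follows from entropy non-negativity alone.

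Third, I would convert the joint $p(\overline{\bm{z}},\overline{y}^{\prime}\,|\,\bm{\theta}_b)$ into an expectation over data by invoking the Markov structure $\bm{x}^{\prime}\rightarrow\overline{\bm{z}}\rightarrow\overline{y}^{\prime}$ declared in the process definition: because $\overline{\bm{z}}$ is obtained from $\bm{x}^{\prime}$ only through the encoder, $p(\overline{\bm{z}},\overline{y}^{\prime}\,|\,\bm{\theta}_b)=\int p(\bm{x}^{\prime},\overline{y}^{\prime})\,p(\overline{\bm{z}}\,|\,\bm{x}^{\prime},\bm{\theta}_b)\,d\bm{x}^{\prime}$. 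Substituting this factorization and then approximating the expectation over $p(\bm{x}^{\prime},\overline{y}^{\prime})$ by the empirical distribution on $D^{\prime}$ — exactly the closing move of Lemma~\ref{bounded-yd} — collapses the bound into $E_{\bm{x}^{\prime}, y^{\prime}}\int p(\overline{\bm{z}}\,|\,\bm{x}^{\prime},\bm{\theta}_b)\log p(\overline{y}^{\prime}\,|\,\overline{\bm{z}},\bm{\theta}_b)\,d\overline{z}$, which is the claimed estimator.

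The step I expect to be the main obstacle is justifying the Markov factorization cleanly, since it rests on $\overline{y}^{\prime}$ being conditionally independent of $\bm{x}^{\prime}$ given $(\overline{\bm{z}},\bm{\theta}_b)$ — that is, the label being predicted purely from the extracted representation. I would state this conditional-independence assumption explicitly before using it, tying it back to the pipeline $\bm{x}^{\prime}\rightarrow\overline{\bm{z}}\rightarrow\overline{y}^{\prime}$, and I would remark that the resulting bound, while using the true posterior $p(\overline{y}^{\prime}\,|\,\overline{\bm{z}},\bm{\theta}_b)$, is rendered computable in practice by replacing $p$ with the trainable personalized head $\bm{\phi}$.
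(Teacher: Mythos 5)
Your proposal is correct, and its skeleton --- lower-bound the conditional mutual information by an expected log-posterior, convert the joint $p(\overline{\bm{z}},\overline{y}^{\prime}|\bm{\theta}_b)$ via the Markov factorization, then pass to the empirical distribution on $D^{\prime}$ --- is the same as the paper's. One bookkeeping point first: the derivation in the paper that actually establishes this statement is the one printed under Lemma~\ref{bounded-yd}; the paper's two proofs appear transposed, since the argument printed under Lemma~\ref{bounded-yz} bounds $I(\overline{y}^{\prime};D|\bm{\theta}_b,\overline{\bm{z}})$, which is Lemma~\ref{bounded-yd}'s claim. Where you genuinely differ is the opening step: the paper introduces a variational surrogate $q(\overline{y}^{\prime}|\overline{\bm{z}},\bm{\theta}_b)$ and invokes $KL\bigl(p(\overline{y}^{\prime}|\overline{\bm{z}},\bm{\theta}_b)\,\|\,q(\overline{y}^{\prime}|\overline{\bm{z}},\bm{\theta}_b)\bigr)\geq 0$, yielding the bound with $q$ inside the logarithm (and then estimating it via the reparameterization trick), whereas you introduce no surrogate and instead drop $H(\overline{y}^{\prime}|\bm{\theta}_b)\geq 0$ after the entropy decomposition. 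Your route is tighter (it is the $q=p$ case of the variational bound), matches the statement as printed --- which indeed displays the true posterior $p(\overline{y}^{\prime}|\overline{\bm{z}},\bm{\theta}_b)$ --- and correctly flags that it relies on the label being discrete. What the paper's route buys is computability: $p(\overline{y}^{\prime}|\overline{\bm{z}},\bm{\theta}_b)$ is intractable, and $q$ is what the trainable head $\bm{\phi}$ instantiates; your closing remark about substituting $\bm{\phi}$ for $p$ is, in effect, reintroducing the paper's $q$ after the fact.

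One slip worth fixing: the conditional independence that licenses $p(\overline{\bm{z}},\overline{y}^{\prime}|\bm{\theta}_b)=\int p(\bm{x}^{\prime},\overline{y}^{\prime})\,p(\overline{\bm{z}}|\bm{x}^{\prime},\bm{\theta}_b)\,d\bm{x}^{\prime}$ is not the one you name. Writing $p(\bm{x}^{\prime},\overline{y}^{\prime},\overline{\bm{z}}|\bm{\theta}_b)=p(\bm{x}^{\prime},\overline{y}^{\prime})\,p(\overline{\bm{z}}|\bm{x}^{\prime},\overline{y}^{\prime},\bm{\theta}_b)$, the factorization requires $p(\overline{\bm{z}}|\bm{x}^{\prime},\overline{y}^{\prime},\bm{\theta}_b)=p(\overline{\bm{z}}|\bm{x}^{\prime},\bm{\theta}_b)$, i.e., $\overline{\bm{z}}\perp \overline{y}^{\prime}$ given $(\bm{x}^{\prime},\bm{\theta}_b)$ --- which is immediate here, since $\overline{\bm{z}}=f_{\bm{\theta}_b}(\bm{x}^{\prime})$ never sees the label; this is exactly the defining property of the chain $Y \leftrightarrow X \leftrightarrow Z$ that the paper declares in its theorem proof. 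The independence you cite as the main obstacle, $\overline{y}^{\prime}\perp\bm{x}^{\prime}$ given $(\overline{\bm{z}},\bm{\theta}_b)$, is the other direction of the chain; it does not follow from the pipeline in general and is not needed for this step. So the obstacle you anticipated is trivial once stated correctly, and your proof goes through as written.
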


\begin{proof}
We first get a zero term $I(\bm{x}^{\prime};\overline{y}^{\prime}|D, \bm{\theta}_b,\overline{\bm{z}})$. Then, we can naturally follow the theoretical analysis in the Appendix A.2 of the work \cite{metamo-iclr20} to obtain $I(\overline{y}^{\prime};D|\bm{\theta}_b,\overline{\bm{z}}) \geq I(\bm{x}^{\prime};\overline{y}^{\prime}|\bm{\theta}_b,\overline{\bm{z}})$. For each client, the input $\bm{x}^{\prime}$ used to predict $\overline{y}^{\prime}$ is the sample of the test set that has a consistent class distribution with the training set. Then, we have 
\begin{equation}
\begin{aligned}
I(\overline{y}^{\prime};D|\bm{\theta}_b,\overline{\bm{z}}) &\geq I(\bm{x}^{\prime};\overline{y}^{\prime}|\bm{\theta}_b,\overline{\bm{z}}) \geq I(\bm{x}^{\prime};\overline{y}^{\prime}|\bm{\theta}_b)-I(\bm{x}^{\prime};\overline{\bm{z}}|\bm{\theta}_b)\\
&\geq I(\bm{x}^{\prime};\overline{y}^{\prime}|\bm{\theta}_b) - E\left[KL(p(\overline{\bm{z}}|\bm{x}^{\prime}, \bm{\theta}_b)||r(\overline{\bm{z}}))\right]
\end{aligned}
\end{equation}
where the second inequality is obtained by using the chain rule of mutual information and then removing the mutual information term $I\left(\bm{x}^{\prime}, \bm{\theta}_b|\bm{\theta}_b, \overline{y}^{\prime}\right)$. Then, with the variational approximation $r(\overline{\bm{z}})$, we can achieve the final term by extending the third term to $I(\bm{x}^{\prime};\overline{y}^{\prime}|\bm{\theta}_b) - E_{p(\bm{x}^{\prime})p(\overline{\bm{z}}|\bm{x}^{\prime},\bm{\theta}_b)}\left[log\frac{p(\overline{\bm{z}}|\bm{x}^{\prime}, \bm{\theta}_b)}{p(\overline{\bm{z}}|\bm{\theta}_b)}\right]$ which can be bounded by $E\left[log\frac{p(\overline{\bm{z}}|\bm{x}^{\prime}, \bm{\theta}_b)}{r(\overline{\bm{z}})}\right]$ in which $p(\overline{\bm{z}}|\bm{x}^{\prime}, \bm{\theta}_b)$ can be computed as a  deterministic function by using the reparameterization trick. However, we do not compute this term in our work for simplicity.
\end{proof}

Considering both conclusions, fair and accurate personalized learning can be achieved by calibrating the self-supervised representations to capture client-specific concepts while preserving high-level semantics. This gives rise to a mutual information maximization problem, as depicted below:

\begin{equation}
\begin{aligned}
&\max I(\overline{y}^{\prime}; \overline{\bm{z}^{\prime}}|\bm{\theta}_b, D) \\
&s.t. I(\overline{y}^{\prime}, D; \bm{\theta}_b|\bm{x}^{\prime}) \leq I_c
\end{aligned}
\label{eq: main_target_eq}
\end{equation}
where $I_c$ is the constant information constraint.

Addressing this optimization problem leads to the core theorem that guides the methodology design of this paper. 

\begin{theorem}[Generality-Personalization Tradeoff]
\label{theo: to_fa}
To calibrate the self-supervised representations for personalization in federated learning, the global encoder $\bm{\theta}_b$ is trained to attain balanced information flows as formulated in Eq.~(\ref*{eq: main_target_eq}). This is equivalent to focusing on the following objectives during the training process:
\begin{equation}
\begin{aligned}
\max_{\bm{\theta}_b} &E\int p(\overline{\bm{z}}|\bm{x}^{\prime}, \bm{\theta}_b)\log p(\overline{y}^{\prime}|\overline{\bm{z}}, \bm{\theta}_b) d\overline{z} 
\\ &- E_{\bm{\theta}_b|D}E_{D^{\prime c}}\left[l^c(\bm{\theta}_b; \bm{x}^{\prime}, y^{\prime}) \right]  \\
&-\beta E\left[{KL}(p(\bm{\theta}_b|D, \bm{x}^{\prime}, \overline{y}^{\prime})||r(\bm{\theta}_b)\right] 
\\ &- E\left[KL(p(\overline{z}|\bm{x}^{\prime}, \bm{\theta}_b)||r(\overline{z}))\right]
\end{aligned}
\label{eq: theorem}
\end{equation}
where $\bm{\phi}^c \sim q(\bm{\phi}^c|D^c,\bm{\theta}_b)$, $(\bm{x}^{\prime},y^{\prime}) \sim D^{\prime c}$. $l^c$ can a classification loss function.
\end{theorem}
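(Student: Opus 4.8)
The plan is to read Eq.~(\ref{eq: main_target_eq}) as a constrained maximization over $\bm{\theta}_b$ and convert it into a Lagrangian, then replace each intractable mutual-information term by the variational bounds already established in Lemmas~\ref{bounded-yd} and~\ref{bounded-yz}. First I would introduce a multiplier $\beta \ge 0$ for the information constraint $I(\overline{y}^{\prime}, D; \bm{\theta}_b|\bm{x}^{\prime}) \le I_c$ and, after dropping the constant $I_c$, obtain the unconstrained surrogate $\max_{\bm{\theta}_b} I(\overline{y}^{\prime}; \overline{\bm{z}}|\bm{\theta}_b, D) - \beta\, I(\overline{y}^{\prime}, D; \bm{\theta}_b|\bm{x}^{\prime})$. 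The ``equivalence'' asserted by the theorem is then to be understood in the standard variational-information-bottleneck sense: maximizing a tractable lower bound of this Lagrangian drives $\bm{\theta}_b$ toward the constrained optimum.

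Next I would use the chain rule of mutual information to relate the objective $I(\overline{y}^{\prime}; \overline{\bm{z}}|\bm{\theta}_b, D)$ to the two conditional quantities the lemmas control, namely the generality term $I(\overline{y}^{\prime}; \overline{\bm{z}}|\bm{\theta}_b)$ of Lemma~\ref{bounded-yz} and the personalization term $I(\overline{y}^{\prime}; D|\bm{\theta}_b, \overline{\bm{z}})$ of Lemma~\ref{bounded-yd}. Substituting the lower bound of Lemma~\ref{bounded-yz} supplies the log-likelihood integral $E\int p(\overline{\bm{z}}|\bm{x}^{\prime}, \bm{\theta}_b)\log p(\overline{y}^{\prime}|\overline{\bm{z}}, \bm{\theta}_b)\,d\overline{z}$ (the first term) together with the representation penalty $-E[KL(p(\overline{z}|\bm{x}^{\prime},\bm{\theta}_b)||r(\overline{z}))]$ (the fourth term). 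Substituting the lower bound of Lemma~\ref{bounded-yd}, once the personalized head $\bm{\phi}^c \sim q(\bm{\phi}^c|D^c,\bm{\theta}_b)$ is introduced as the channel through which $D$ reaches $\overline{y}^{\prime}$, reduces to the expected negative log-likelihood of the predictor on held-out samples, i.e.\ the classification-loss term $-E_{\bm{\theta}_b|D}E_{D^{\prime c}}[l^c(\bm{\theta}_b; \bm{x}^{\prime}, y^{\prime})]$ (the second term).

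It then remains to handle the constraint term. Here I would upper-bound $I(\overline{y}^{\prime}, D; \bm{\theta}_b|\bm{x}^{\prime})$ by introducing a variational marginal $r(\bm{\theta}_b)$ and invoking the nonnegativity of $KL(p(\bm{\theta}_b|\cdot)||r(\bm{\theta}_b))$, so that the mutual information is dominated by $E[KL(p(\bm{\theta}_b|D, \bm{x}^{\prime}, \overline{y}^{\prime})||r(\bm{\theta}_b))]$; since this term enters the Lagrangian with a negative sign, its upper bound yields a valid lower bound of the objective, and multiplying by $-\beta$ produces the third term. Collecting the four pieces reproduces Eq.~(\ref{eq: theorem}); because every replacement is a sound bound on a maximization target, the aggregate is a tractable objective whose maximization is consistent with the original constrained program.

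I expect the main obstacle to be the second step's bridge from the abstract conditional mutual information $I(\overline{y}^{\prime}; D|\bm{\theta}_b, \overline{\bm{z}})$ to the concrete empirical classification loss $l^c$. This requires specifying the variational family for $q(\bm{\phi}^c|D^c,\bm{\theta}_b)$, justifying that the test inputs share the training class distribution (as already noted in the proof of Lemma~\ref{bounded-yz}), and verifying that the chain-rule decomposition does not double-count the representation $\overline{\bm{z}}$ once the data flow is routed through $\bm{\phi}^c$. Ensuring these pieces align so that the negative log-likelihood is exactly $l^c$ averaged over $q(\bm{\phi}^c|D^c,\bm{\theta}_b)$ and $D^{\prime c}$ is the delicate accounting on which the final identity rests.
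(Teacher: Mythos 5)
Your plan retraces the paper's own proof essentially step for step: the Lagrangian relaxation with multiplier $\beta$, the chain-rule decomposition $I(\overline{y}^{\prime};\overline{\bm{z}}\,|\bm{\theta}_b,D)=I(\overline{y}^{\prime};\overline{\bm{z}}\,|\bm{\theta}_b)+I(\overline{y}^{\prime};D|\bm{\theta}_b,\overline{\bm{z}})-I(\overline{y}^{\prime};D|\bm{\theta}_b)$, substitution of the variational bounds from Lemmas~\ref{bounded-yz} and~\ref{bounded-yd}, and the bound $E\left[\mathrm{KL}(p(\bm{\theta}_b|D,\bm{x}^{\prime},\overline{y}^{\prime})\,\|\,r(\bm{\theta}_b))\right]$ on the constraint term --- where you are actually more careful than the paper, which mislabels this upper bound on the mutual information a ``lower bound.'' The one bookkeeping divergence is the classification-loss term: the paper does not extract it from Lemma~\ref{bounded-yd} directly but instead replaces the residual $I(\bm{x}^{\prime};\overline{y}^{\prime}|\bm{\theta}_b)-I(\overline{y}^{\prime};D|\bm{\theta}_b)$ left over after the chain-rule flattening with a conditional classification loss via an informal plausibility argument --- so the ``delicate accounting'' you correctly flag as the main obstacle is, in the paper, resolved heuristically rather than by a rigorous bound.
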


\begin{proof}
    Given the optimization problem:
    \begin{equation}
    \begin{aligned}
    &\max I(\overline{y}^{\prime}; \overline{\bm{z}^{\prime}}|\bm{\theta}_b, D) \\
    &s.t. I(\overline{y}^{\prime}, D; \bm{\theta}_b|\bm{x}^{\prime}) \leq I_c
    \end{aligned}
    \label{eq: proofmain_target_eq}
    \end{equation}
    where $I_c$ is the constant information constraint.
    
    The predictive process can be defined as a Markov chain shown as $Y \leftrightarrow X \leftrightarrow Z$, where Y is the label, X is the input sample and Z is the latent feature. The optimization problem in Eq.\ref{eq: proofmain_target_eq} is solved by introducing a Lagrange multiplier $\beta$. 
    
    Then, the problem is transformed into maximizing the formula below 
    \begin{equation}
    \begin{aligned}
    \triangle=&I(\overline{y}^{\prime}; \overline{\bm{z}}, |\bm{\theta}_b, D) - \beta I(\overline{y}^{\prime}, D; \bm{\theta}_b|\bm{x}^{\prime}) \\
    =&I(\overline{y}^{\prime};\overline{\bm{z}}, |\bm{\theta}_b)+I(\overline{y}^{\prime};D|\bm{\theta}_b,\overline{\bm{z}} )-I(\overline{y}^{\prime};D|\bm{\theta}_b)\\
    & - \beta I(\overline{y}^{\prime}, D; \bm{\theta}_b|\bm{x}^{\prime})
    \end{aligned}
    \label{eq: loew_bound}
    \end{equation}
    
    Then, the final term equals to $E\left[\log \frac{p(\overline{y}^{\prime},D,\bm{\theta}_b|\bm{x}^{\prime})}{p(\overline{y}^{\prime},D|\bm{x}^{\prime})p(\bm{\theta}_b|\bm{x}^{\prime})} \right]=E\left[\log \frac{p(\bm{\theta}_b|D, \bm{x}^{\prime}, \overline{y}^{\prime})}{p(\bm{\theta}_b|\bm{x}^{\prime})} \right]$, which has the lower bound $E\left[{KL}(p(\bm{\theta}_b|D, \bm{x}^{\prime}, \overline{y}^{\prime})||r(\bm{\theta}_b)\right]$ where $r(\bm{\theta}_b)\sim N(0,I)$ can be a variational approximation to the target distribution of $\bm{\theta}_b$.
    
    After introducing the lemmas 1, lemma 2 and the above obtained lower bound for the $\beta$ term, the chain rule flatten of $\triangle$ additional includes the mutual information term $I(\bm{x}^{\prime};\overline{y}^{\prime}|\bm{\theta}_b) - I(\overline{y}^{\prime};D|\bm{\theta}_b)$. If the $I(\bm{x}^{\prime};\overline{y}^{\prime}|\bm{\theta}_b)$ is 0, the model predictions do not depend on the given $\bm{x}^{\prime}$, leading to low accuracy. Thus, maximizing $I(\bm{x}^{\prime};\overline{y}^{\prime}|\bm{\theta}_b)$ term is equivalent to minimizing classification losses $l^c$ for high accuracy. Similarly, as the online encoder $\bm{\theta}_b$ learns the generic representation for prediction on the client, the representation containing distinguishable features for classification leads to predictions that are less dependent on local training data sets, i.e., a smaller $I(\overline{y}^{\prime};D|\bm{\theta}_b)$.  Therefore, the local representation learned by the $\bm{\theta}_b$ in each client is expected to contain prototype information for the local classes for direct prediction on the $D^{\prime}$. Therefore, we can replace this term with a conditional classification loss objective function that is computed conditioned on the global model $\bm{\theta}_b$.
\end{proof}

\subsection{Design}
\begin{algorithm}[h]
    \SetKwInOut{Input}{Input}
    \SetKwInOut{Output}{Output}
    \SetAlgoLined
    
        \Input{Global model $\bm{\theta}$. One batch of samples $B$.}
        \KwOut{Computed loss $L$}
    
        \For{$\bm{x}_i \in B$, $i \in \left\{1,...,N\right\}$}{
            Use augmentation functions $A$ of SimCLR \cite{simclr-icml20} \\
            $\widehat{\bm{x}}_{2i-1} = A\left(\bm{x}_i\right)$; $\widehat{\bm{x}}_{2i} = A\left(\bm{x}_i\right)$\\
            $\bm{z}_{2i-1} = f_{\bm{\theta}_b}\left(\widehat{\bm{x}}_{2i-1}\right)$; $\bm{z}_{2i} = f_{\bm{\theta}_b}\left(\widehat{\bm{x}}_{2i}\right)$\\
            $\bm{h}_{2i-1}=f_{\bm{\theta}_h}\left(\bm{z}_{2i-1}\right)$; $\bm{h}_{2i}=f_{\bm{\theta}_h}\left(\bm{z}_{2i}\right)$ \\
           }
        $l_s = \textrm{NTXent}\left(\left\{\bm{h}_i\right\}^{2N}_{i=1}\right)$\\
        \For{$k \in K_r$}{
            $\bm{\nu}_{2k-1}=\frac{1}{N_k}\sum_{j \in I^k_e}\bm{h}_{j}$\\
            $\bm{\nu}_{2k}=\frac{1}{N_k}\sum_{j \in I^k_o}\bm{h}_{j}$

        }
        \PyCode{$l_p = \textrm{NTXent}\left(\left\{\bm{\nu}_i\right\}^{2K}_{i=1}\right)$}\\
        $K_r = KMeans\left(\bm{z}\right)$, $\bm{z}=\left[z_{2i-1}, z_{2i}\right]$ \\
        \For{$k \in K_r$}{
            $\bm{v}_{k}=\frac{1}{N_k}\sum_{j \in I^k_o}\bm{z}_{j}$\\
        }
        $L_n = \sum_{k \in K^r} \frac{-1}{ N_k }\sum_{j \in I^k_e}\log \frac{\exp\left(\bm{z}_j \cdot \bm{v}_k / \tau \right)}{\sum_{a \in I_{e}\diagdown k }\exp\left(\bm{z}_a \cdot \bm{v}_k / \tau \right)}$\\    

        $L^c = l_c + l_s + \alpha \left(l_p+ l_n\right)$\\
    \caption{Loss computation for \emph{Calibre} (SimCLR)}
    \label{algo:Calibre}
    \end{algorithm}

Building on the Theorem \ref{theo: to_fa}, the core of \emph{Calibre} to fair and accurate personalized FL is a novel loss function $L^c$ containing four terms as shown in \cref{eq: theorem}. 

\noindent \textbf{Prototype generation.} Extending the concept of prototypes \cite{fedproto-aaai22}, \emph{Calibre} first generates pseudo labels through a straightforward clustering algorithm, such as KMeans, thereby the prototype vector for the $k$-th cluster is calculated as the average of encodings assigned to this group. 

\noindent\textbf{Client-adaptive prototype regularizers.} The last two terms of the \cref{eq: theorem} can be safely omitted without negatively affecting local learning. Specifically, $\left[{KL}\left(p(\bm{\theta}_b|D, \bm{x}^{\prime}, \overline{\bm{y}}^{\prime})||r\left(\bm{\theta}_b\right)\right)\right]$ is removed because the $\bm{\theta}_b$ is designed upon discriminative parameters. Kullback-Leibler (KL) divergence term $E\left[KL(p(\overline{z}|\bm{x}^{\prime}, \bm{\theta}_b)||r(\overline{z}))\right]$ is ignored because aligning the learned latent space with a desired prior distribution in our training stage is unnecessary.

Two regularizers, originating from the first two terms, are calculated adaptively based on the samples utilized by each client during the local update.

With the reparameterization trick, the first term $E\int p(\overline{\bm{z}}|\bm{x}^{\prime}_n, \bm{\theta}_b)\log q(y_n|\overline{\bm{z}}, \bm{\theta}_b) d\overline{z}$ is estimated as $E_{\bm{x}^{\prime}} E_{\epsilon \sim N(0, I)}\left[\log q(\overline{y}^{\prime}|\bm{x}^{\prime}, \bm{\theta}_b, \epsilon)\right]$. Conditional on $\bm{\theta}_b$, the input $\bm{x}^{\prime}$ is encoded to $\bm{z}=f_{\bm{\theta}_b}\left(\bm{x}^\prime\right)$. After generating prototypes $\left\{\bm{v}_k\right\}_{k=1}^K$ from a batch of encodings, the \textit{prototype-based meta regularizer} $L_n$ is computed based on $p_{\bm{\theta}_b}\left(\overline{y}^{\prime}=k|\bm{x}^{\prime}\right)=\mathrm{softmax}\left(-d\left(\bm{z}, \bm{v}_k\right)\right)$ \cite{prototype-nips17}, where $d\left(\cdot\right)$ represents a distance measurement function, such as the Euclidean distance. The second term $E_{\bm{\theta}_b}E_{D^{\prime c}}\left[l^c(\bm{\theta}_b; \bm{x}^{\prime}, y^{\prime}) \right]$ aims to boost the representations extracted from $\bm{\theta}_b$ by introducing class distinguishability to benefit the personalized prediction. Under unsupervised learning, we propose a \textit{prototype-oriented contrastive regularizer} $L_p$ aiming to reduce prototype variances of the same classes from different augmented views of samples or vice versa.

Algorithm~\ref{algo:Calibre} presents the pseudocode for computing the loss of \emph{Calibre} (SimCLR) during the training stage. During each iteration of the local update, the computation of $L_n$ and $L_p$ relies on the augmented views of input samples generated by SSL approaches. Specifically, for one batch $B$ containing $N$ samples, the augmented dual-view samples are denoted as $I \equiv \left\{I_o, I_e\right\}$ where $I_o=\left\{\widehat{\bm{x}}_{2i}\right\}_{i=1}^N$ and $I_e=\left\{\widehat{\bm{x}}_{2i-1}\right\}_{i=1}^N$. To compute $l_n$, the prototypes $\left\{\bm{v}_k\right\}_{k=1}^{K_r}$ are generated based on the representations $\bm{z}$ of the sample from $I_e$. After assigning $I_o$ to these prototypes, the $l_n$ is the distance between each encodings of $I_o$ and its corresponding prototype, formulated as $L_n = \sum_{k \in K_r} \frac{-1}{ N_k }\sum_{j \in I^k_e}\mathrm{softmax}\left(-d\left(\bm{z}_j, \bm{v}_k\right)\right)$, where $I^k_e$ represents the samples assigned to prototype $k$ and $N_k = | I^k_e |$. Then, for the $L_p$, the prototypes $\left\{\bm{\nu}_{2k}\right\}_{k=1}^K$ and $\left\{\bm{\nu}_{2k-1}\right\}_{k=1}^{K_r}$ of $I_o$ and $I_e$ are computed based on the model outputs $\bm{h}_{2i-1}$ and $\bm{h}_{2i}$. These two views of prototypes then behave as contrastive samples to compute the $L_p$ following the formulation of NTXent loss \cite{simclr-icml20}. Finally the global loss $L = l_c + l_s + \alpha \left(l_p+ l_n\right)$, where $l_s$ depends on which SSL approach is used. In the case of \emph{Calibre} (SimCLR), the NTXent loss \cite{simclr-icml20} is used and denoted as $L_s$, as shown in Table~\ref{algo:Calibre}.

\section{Experimental Results}
\label{sec:exper}


\begin{figure*}[t]
  \centering
  \includegraphics[width=0.94\textwidth]{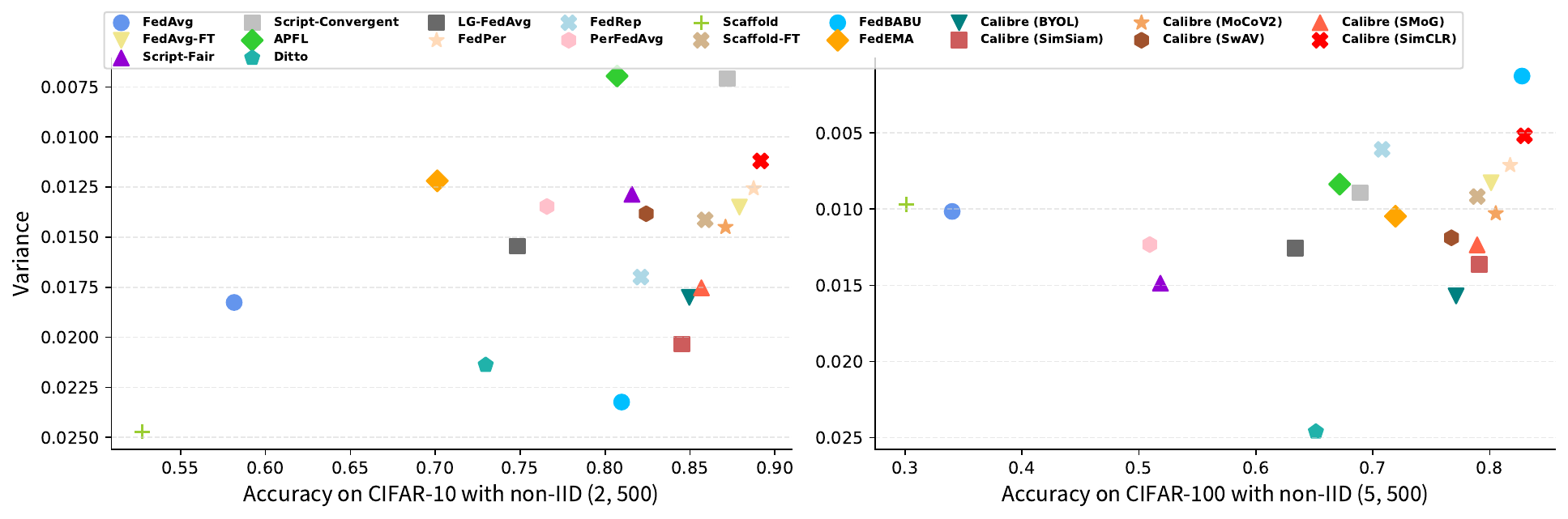}\\
  \includegraphics[width=0.94\textwidth]{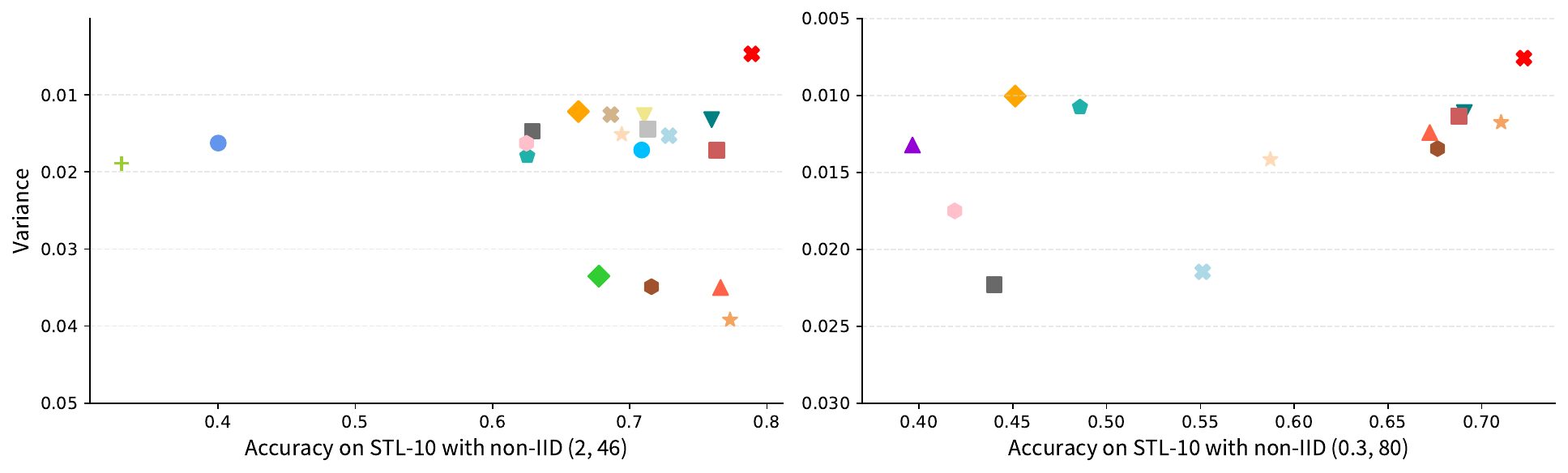}
  \caption{Comparison of Mean and Variance of Test Accuracy among $100$ clients across different non-i.i.d.~settings in the \texttt{CIFAR-10}, \texttt{CIFAR-100}, and \texttt{STL-10} datasets.}
  \vspace{-0.6cm}
  \label{Fig. acctovar}
\end{figure*} 

\subsection{Experimental Setup}
\label{resultsetup}

\textbf{Experimental platform}. All experiments are conducted on \href{https://github.com/TL-System/plato}{Plato}, an open-source research framework for federated learning. We utilize a single NVIDIA RTX A4500 GPU with 20 GB of CUDA memory. Additionally, one task is allocated 10 CPUs, each CPU having 3GB of memory.

\textbf{Datasets}. We perform the evaluation on three widely-used datasets, \texttt{CIFAR-10}, \texttt{CIFAR-100}, and \texttt{STL-10} \cite{stl10-aistats11}. The first three frequently utilized datasets comprise fully annotated samples. However, \texttt{STL-10} comprises $100,000$ unlabeled samples in addition to $5,000$ labeled training samples.

\textbf{Model settings}. We utilize the \textit{ResNet-18} network in experiments on \texttt{CIFAR-10}, \texttt{CIFAR-100} and \texttt{STL-10} datasets. In order to maintain a fair comparison, the fully-connected layers of both networks are substituted with a linear classifier. Consequently, for the \texttt{CIFAR-10}, \texttt{CIFAR-100}, and \texttt{STL-10} datasets, the input dimension of this classifier is set to $512$, corresponding to the number of classes as the output. We refer to the fully convolutional neural part as the \textbf{Encoder} while the linear classifier as the \textbf{Head}. Thus, in SSL approaches, the \textbf{Encoder} is the feature backbone, which also behaves as the global model exchanged between clients and the server. \emph{Calibre} (BYOL), \emph{Calibre} (SimCLR), \emph{Calibre} (MoCov2), \emph{Calibre} (Simsiam), \emph{Calibre} (SwAV), and \emph{Calibre} (SMoG) are built upon BYOL \cite{byol-nips20}, SimCLR \cite{simclr-icml20}, Simsiam \cite{simsiam-cvpr21}, MoCoV2 \cite{moco-cvpr20}, SwAv \cite{swav-nips20}, and SMoG \cite{smog-eccv22}, respectively. 

\textbf{Benchmark approaches}. Well-known federated learning approaches, including FedAvg \cite{fedavg-aistats17}, LG-FedAvg\cite{lgfedavg-nips19}, SCAFFOLD \cite{scaffold-icml20}, are included as the benchmark approaches. Specifically, FedAvg-FT and SCAFFOLD-FT refer to scenarios where the global model is initially trained using FedAvg and SCAFFOLD, respectively. Subsequently, the \textbf{Head} component of the model is fine-tuned based on the local dataset. Our experiments also contain advanced personalized FL approaches, including FedRep \cite{fedrep-icml21}, FedBABU \cite{fedbabu-iclr22}, FedPer \cite{fedper-arxiv19}, PerFedAvg \cite{perfedavg-nips20}, APFL \cite{apfl-arxiv20}. Calibre is also compared with Ditto \cite{ditto-icml21}, which achieves fairness through personalization. Additionally, we assess Calibre against the closely related work FedEMA \cite{fedema-iclr22} to provide a comprehensive evaluation. Finally, in addition to these approaches, we allow each client to train its personalized model (i.e., a linear classifier) separately based solely on their local datasets. Script-Convergent refers to the model trained until convergence, whereas Script-Fair corresponds to the model trained after $10$ epochs.

\textbf{Learning settings}. We have a total of $100$ clients participating in training the global model for $200$ rounds. In addition, there are $50$ novel clients that are excluded from the training process. In each round, $10$ clients are randomly selected to perform $3$ epochs of local update. After $200$ rounds of training, all $150$ clients will download the trained global model to perform the personalization based on the local dataset. Leveraging the trained global model, denoted as the \textbf{Encoder}, as the feature extractor, each client optimizes their personalized model for $10$ epochs using the SGD optimizer with a learning rate of $0.05$. The batch size is $32$. In the loss computation of \emph{Calibre}, we set the value of $\alpha$ as $0.3$. During empirical comparisons of SSL approaches, we chose a batch size of $256$. Although this batch size is relatively small, it does not impact the validity of our experiments or the resulting conclusions.
  
\textbf{Non-i.i.d.~settings}. \textit{Quantity-based label non-i.i.d.}, abbreviated as Q-non-i.i.d., means that each client owns data samples of a fixed number of labels. Thus, we control the heterogeneity of \texttt{CIFAR-10}, \texttt{CIFAR-100} and \texttt{STL-10} by assigning different numbers $S$ of classes per client, from among $K=10$ and $K=100$ total classes, respectively. Each client is assigned the same number of training samples, namely $\frac{D}{K}$, where $D$ is the total samples. It is denoted as $\left(S, \#samples \right)$. For \textit{distribution-based label non-i.i.d.}, abbreviated as D-non-i.i.d., each client uses a divided partition of the whole dataset, biased across labels according to the Dirichlet distribution with the concentrate $0.3$. We refer to this non-i.i.d.~setting as $\left(0.3, \#samples \right)$ for clarity and distinction.

\textbf{Embeddings of representations}. The qualitative results in our paper are depicted through a 2D embedding of the learned representations generated by the global model. During the personalization stage, each client forwards their local samples through the trained global model (\textbf{Encoder}) to extract features. The quality of these extracted features significantly impacts subsequent personalized learning. Thus, to gain deep insights, t-SNE is exploited as a dimensionality reduction technique to reduce the high-dimensional feature space to a 2D representation for visualization.

\subsection{Accuracy}

\begin{figure*}[t]
    \centering
    \includegraphics[width=0.93\textwidth]{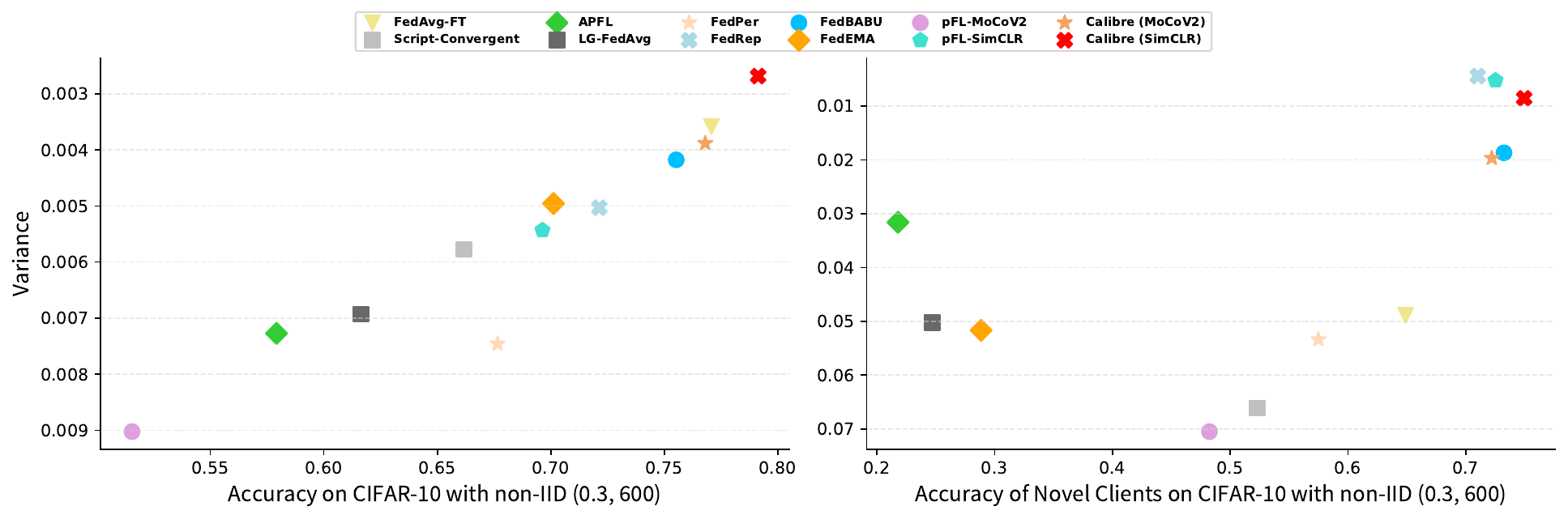}\\
    \includegraphics[width=0.93\textwidth]{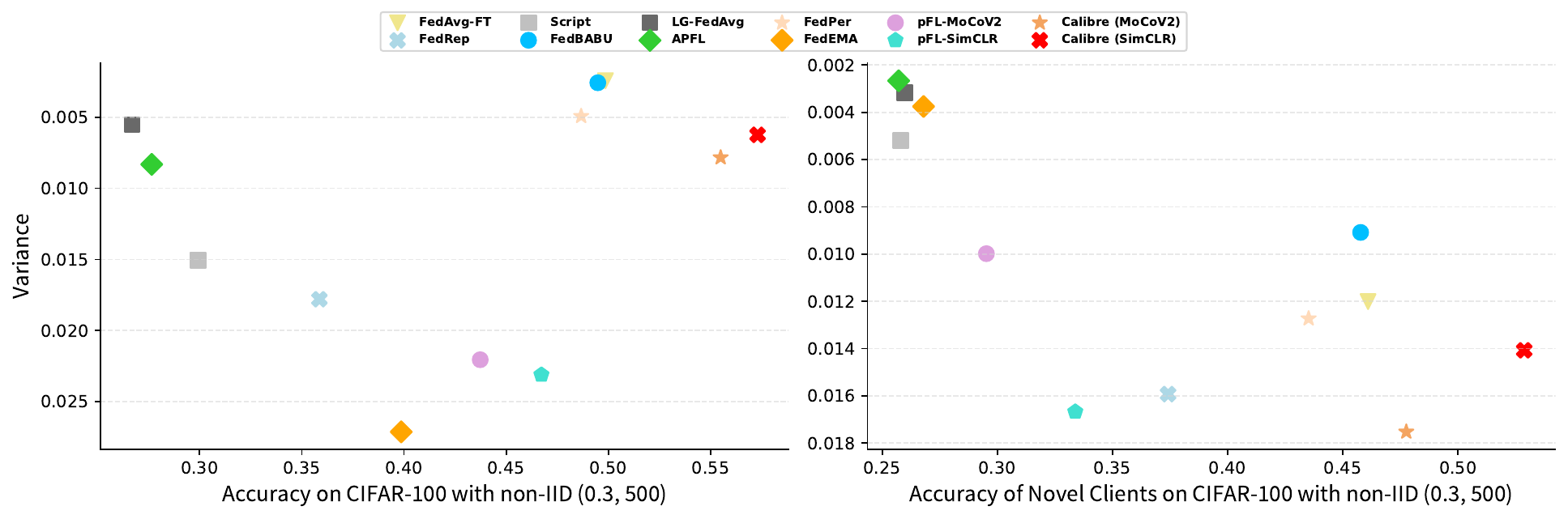}
    \caption{Comparison of Mean and Variance of test accuracy of 150 clients in \texttt{CIFAR-10} and \texttt{CIFAR-100} datasets under the \textit{distribution-based label non-i.i.d.}.}
    \vspace{-0.2cm}
    \label{Fig. dis}
\end{figure*} 
\begin{figure*}[ht]
    \centering
    \includegraphics[width=0.23\textwidth]{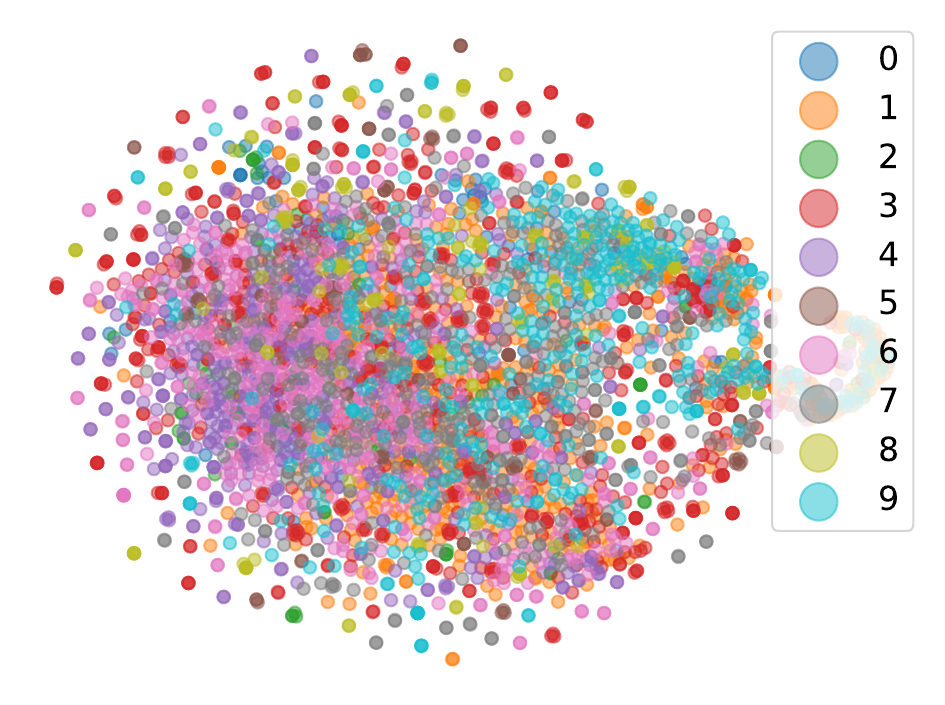}
    \includegraphics[width=0.23\textwidth]{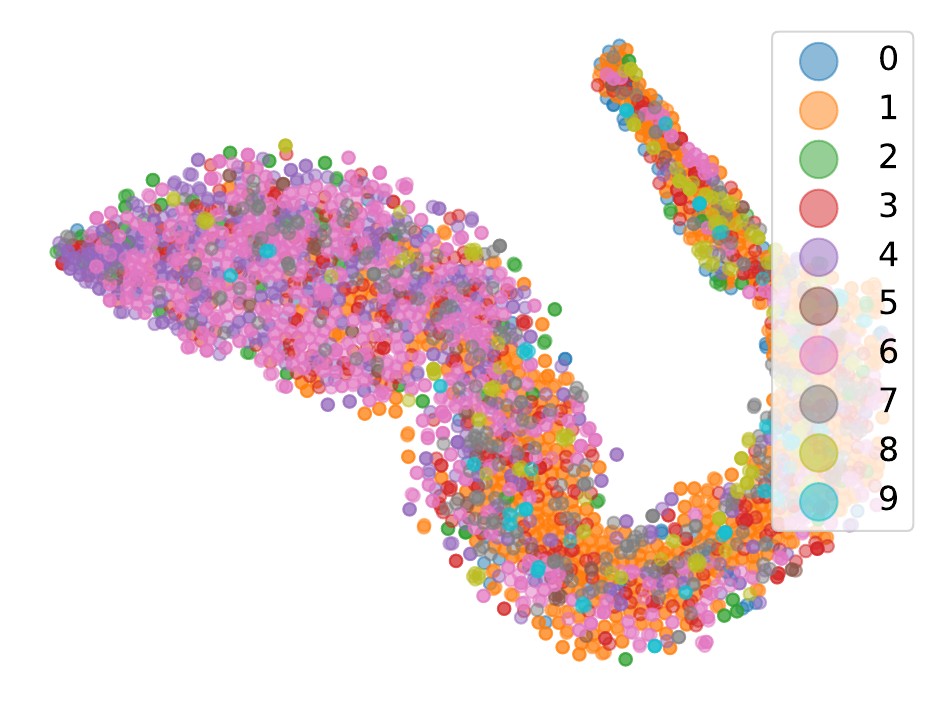}
    \includegraphics[width=0.23\textwidth]{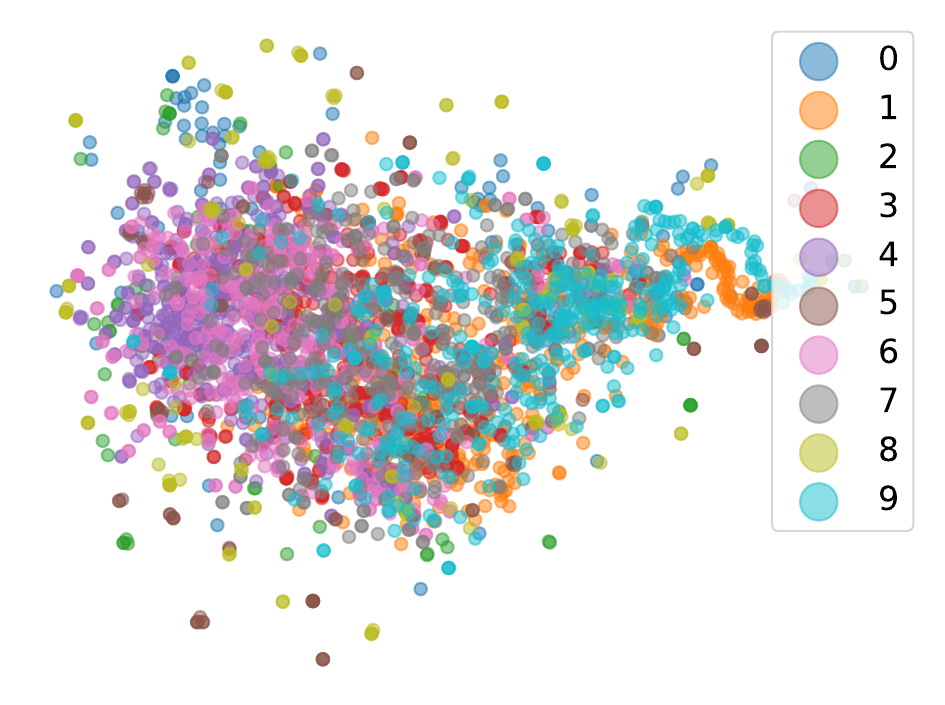}
    \includegraphics[width=0.23\textwidth]{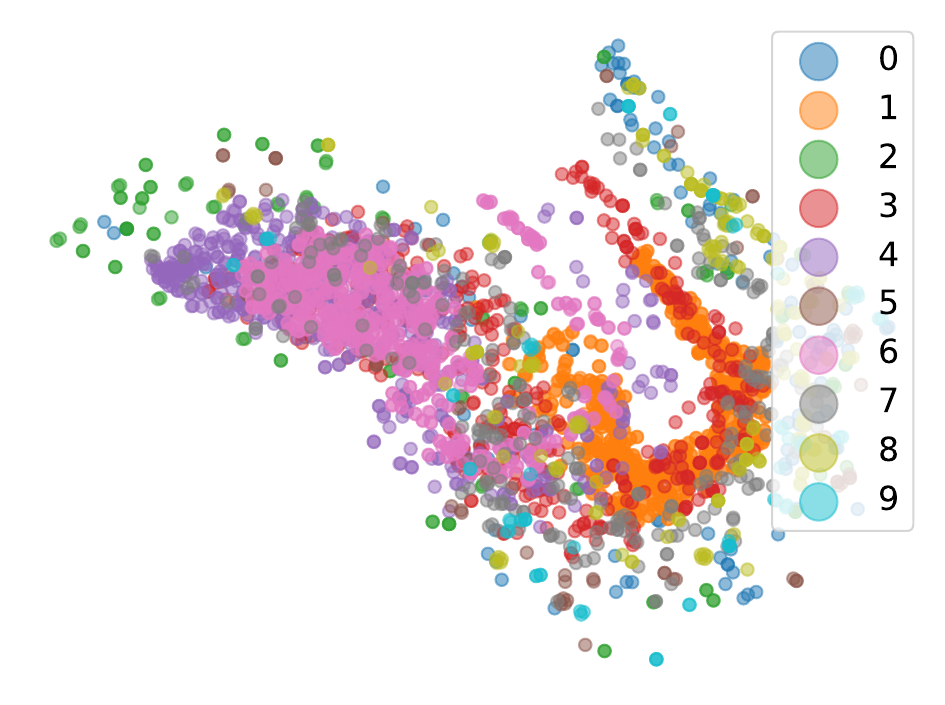}
    \caption{Illustrations of 2D t-SNE embeddings for representations obtained from encoders trained with pFL-SimSiam, pFL-MoCoV2, \emph{Calibre} (SimSiam), and \emph{Calibre} (MoCoV2), respectively. This experiment is conducted on the \texttt{CIFAR-10} dataset under D-non-i.i.d. with the concentration $0.3$. We collect representations from $6$ out of $100$ clients for visualization.}
    \vspace{-0.4cm}
\label{fig: qua2}
\end{figure*} 

As shown in \cref{Fig. acctovar}, and \cref{Fig. dis}, \emph{Calibre}, built upon four SSL methods, consistently obtains competitive accuracy. Especially, \emph{Calibre} (SimCLR) maintains the highest mean accuracy in various non-i.i.d.~settings of four datasets. In the Q-non-i.i.d. setting, when applied to the datasets \texttt{CIFAR-10}, \texttt{CIFAR-100}, and \texttt{STL-10}, Calibre utilizing the SimCLR learning structure demonstrates superior performance in terms of mean accuracy. It outperforms sub-optimal methods by $1.71\%$, $1.51\%$, and $6.03\%$ on the respective datasets. This is because features extracted by the global model trained with Calibre contain cluster information. Moreover, based on these transferable representations, the personalized model converges faster and can generalize better on the specific dataset, even when local samples are limited and imbalanced. Finally, any client can train a high-quality personalized model to gain high accuracy on its test set. 

While both non-i.i.d.~settings involve label skewness among clients, the D-non-i.i.d. setting presents a greater challenge than the Q-non-i.i.d. setting. This is because, in the D-non-i.i.d. setting, each client experiences sample skewness among classes, further complicating the learning process. Remarkably, under the D-non-i.i.d. setting for the \texttt{STL-10} dataset, \emph{Calibre} (SimCLR) significantly outperforms FedBABU by a considerable margin of $15.18\%$. The primary reason is that Calibre is designed to learn robust representations that target the generality-personalization tradeoff without explicitly relying on labels. Therefore, non-i.i.d.~data challenges present limited influence on the training process of our Calibre. And, Calibre is able to sufficiently learn from a large number of unlabeled samples in \texttt{STL-10} while other methods cannot.

When compared to traditional personalized FL methods, recent approaches like FedRep \cite{fedrep-icml21} and FedBABU \cite{fedbabu-iclr22}, which focus on learning a global encoder capable of extracting generic representations, demonstrate improved accuracy across clients and reduce accuracy divergence. The unique advantage of Calibre is the ability to avoid label skewness in representation learning while exploiting prototypes to maintain the distinguishability of features. The incorporation of Calibre into SimCLR and MoCoV2 during the federated training stage proves to be highly effective, as evidenced by the results obtained in the D-non-i.i.d. scenario on the \texttt{CIFAR-10} and \texttt{CIFAR-100} datasets shown in \cref{Fig. dis}. By utilizing the well-trained global model of \emph{Calibre}, the majority of clients can train personalized models of high quality, resulting in significantly improved mean accuracy. Specifically, in the demanding non-i.i.d.~scenarios of \texttt{CIFAR-10} and \texttt{CIFAR-100}, as illustrated in the subfigure of the first column in \cref{Fig. dis}, \emph{Calibre} (SimCLR) achieves superior mean accuracy compared to FedAvg-FT, surpassing it by $2.97\%$ and $7.11\%$, respectively.

\subsection{Fairness}

The fairness guarantee of our proposed Calibre has been substantiated through extensive experiments conducted on diverse datasets and a wide range of non-i.i.d.~settings, as evidenced by the results depicted in \cref{Fig. acctovar}, and \cref{Fig. dis}. Calibre consistently maintains a remarkably low accuracy variance across all four datasets, demonstrating its competitive performance. 

As shown in the first row of \cref{Fig. acctovar}, under the Q-non-i.i.d. in \texttt{CIFAR-10} and \texttt{CIFAR-100} datasets, the fairness performance of Calibre significantly approaches the optimal APFL and FedBABU, with the variance being only $0.0031$ and $0.0048$ higher than those of the two methods, respectively. This implies that the generated representations of Calibre face challenges in achieving robust generalization for 2-way classification tasks on certain clients. However, in the face of the challenging dataset, as exemplified by the second row of \cref{Fig. acctovar} for the \texttt{STL-10} dataset, Calibre demonstrates a substantial enhancement in fairness, surpassing the competitor method, FedEMA, by an impressive $31.1\%$ in percentage.

Notably, the advantage of Calibre in achieving fairness can be verified in challenging D-non-i.i.d. cases where clients exhibit varying degrees of label imbalance in their samples. As illustrated in the first column of \cref{Fig. dis}, \emph{Calibre}, built upon SimCLR and MoCoV2, trains the model to capture a fair condition that the accuracy of most clients remains a higher value. Consequently, Calibre achieves a remarkable reduction of $23.8\%$ in variance compared to FedAvg-FT. We contend that training the global encoder in an unsupervised manner naturally captures generic representations, particularly under D-non-i.i.d. scenarios. Although such representations can contribute fairly to all clients, the sparse class information in the representations may result in suboptimal mean accuracy. Calibre resolves this issue by calibrating the pure SSL representations using the concept of prototypes, leading to significant improvements.

\begin{table}\footnotesize\centering
    
    \caption{Test Accuracy $mean\pm std$ on the non-i.i.d.~setting with  $\left(2, 500 \right)$ for the \texttt{CIFAR-10} dataset. The checkmark (\checkmark) indicates that the corresponding term is included in Calibre.}
    \begin{adjustbox}{max width=\columnwidth}
        \begin{tabular}{ccc|l|l|l}
        \toprule
        $L_n$        & $L_p$        & \emph{Calibre} (SimCLR)    & \emph{Calibre} (SwAV)    & \emph{Calibre} (SMoG)  \\ \midrule \midrule
                     &              & $54.67\pm14.32$     & $85.03\pm15.1$    & $86.19\pm11.32$ \\ \midrule
                     & \checkmark   & $73.58\pm10.13$     & $84.76\pm12.5$    & $87.23\pm10.9$  \\ \midrule
        \checkmark   &              & $81.07\pm12.92$     & $79.31\pm15.73$   & $77.31\pm13.24$ \\ \midrule
        \checkmark   & \checkmark   & $89.16\pm10.58$     & $81.42\pm11.93$   & $80.07\pm11.2$  \\ \bottomrule
        \end{tabular}
    \end{adjustbox}
\label[type]{tab: ablastudy}
\vspace{-0.2cm}
\end{table}

\begin{figure*}[h]
    \centering
    \includegraphics[width=0.23\textwidth]{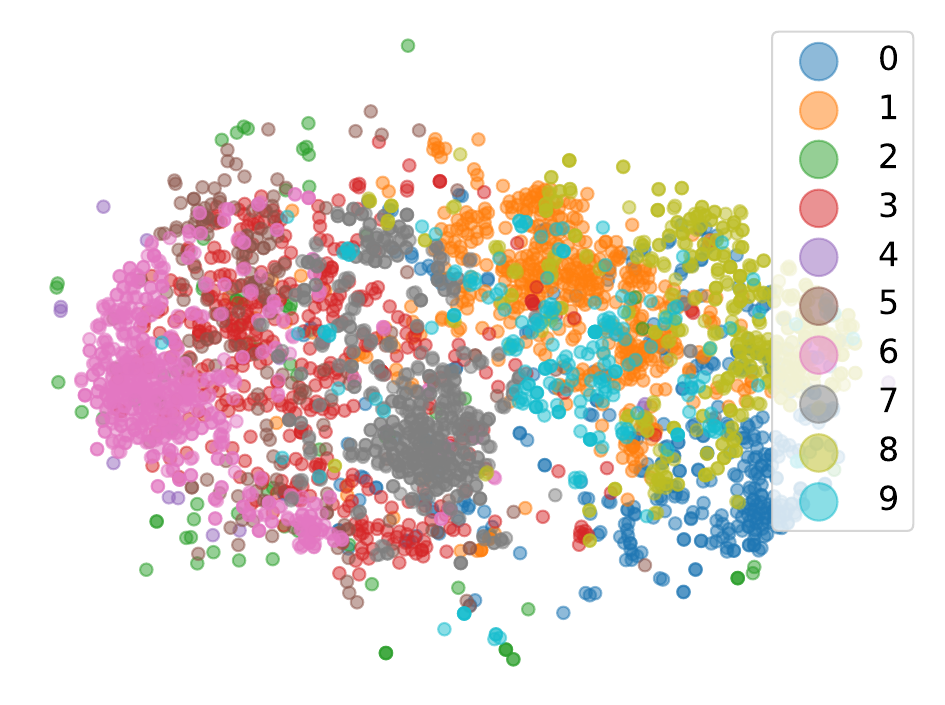}
    \includegraphics[width=0.23\textwidth]{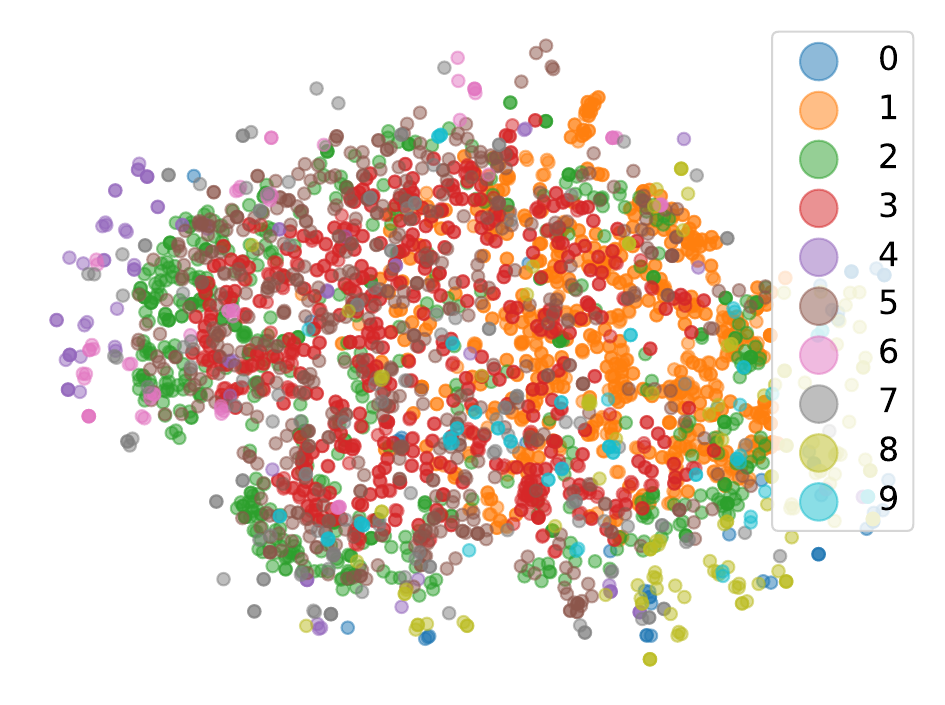}
    \includegraphics[width=0.23\textwidth]{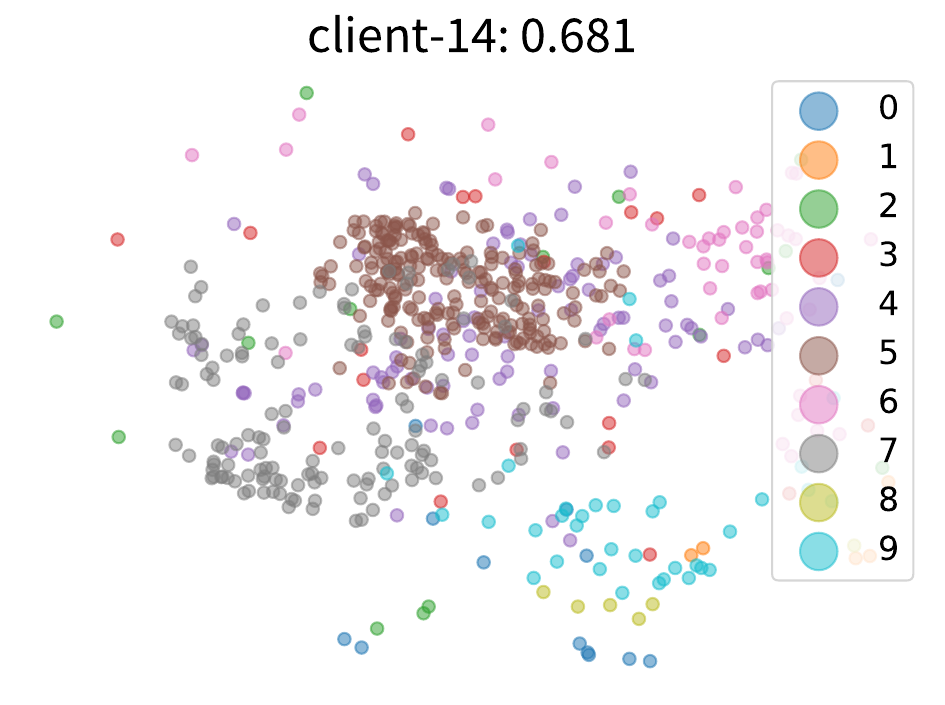}
    \includegraphics[width=0.23\textwidth]{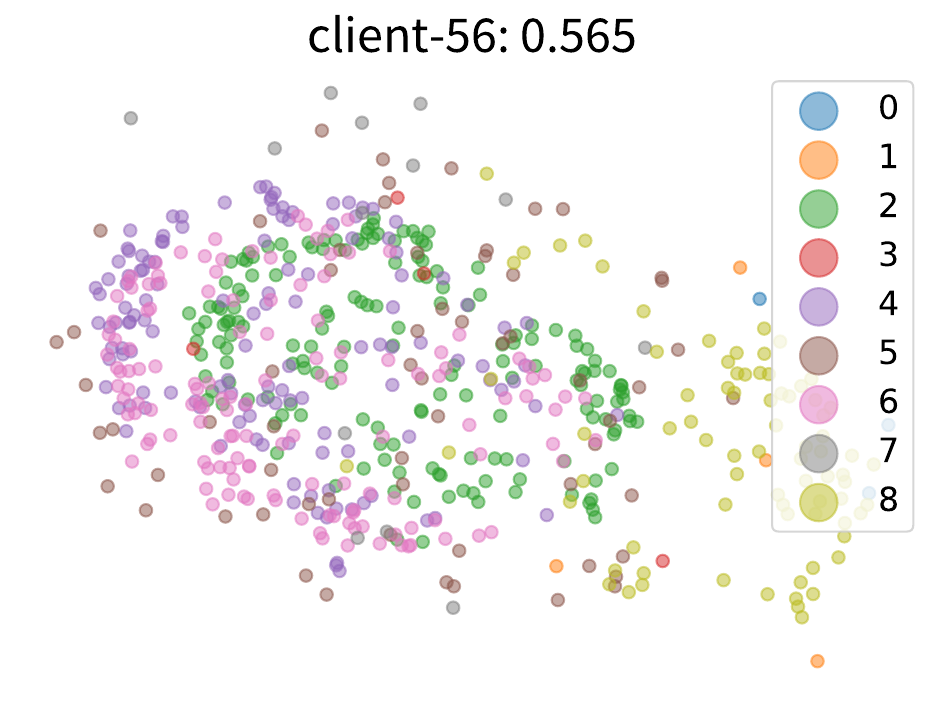}
    \caption{Illustrations of 2D t-SNE embeddings for representations obtained from encoders trained with \emph{Calibre} (SimCLR) and \emph{Calibre} (BYOL), respectively. This experiment is conducted on the \texttt{CIFAR-10} dataset under D-non-i.i.d. with the concentration $0.3$. Representations from $6$ out of $100$ clients are collected for visualization in the first two sub-figures. In comparison to \cref{fig: fuzzy1}, the last subfigures display the local representations for Client $14$ using \emph{Calibre} (SimCLR) and Client 56 using \emph{Calibre} (BYOL).}
    \vspace{-0.4cm}
\label{fig: qua3}
\end{figure*}

\begin{figure}[t]
    \centering
    \includegraphics[width=0.23\textwidth]{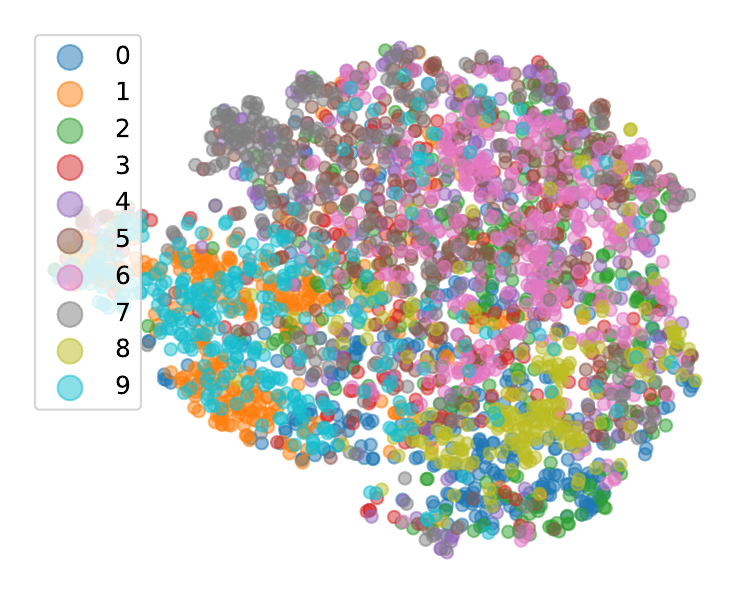}
    \includegraphics[width=0.23\textwidth]{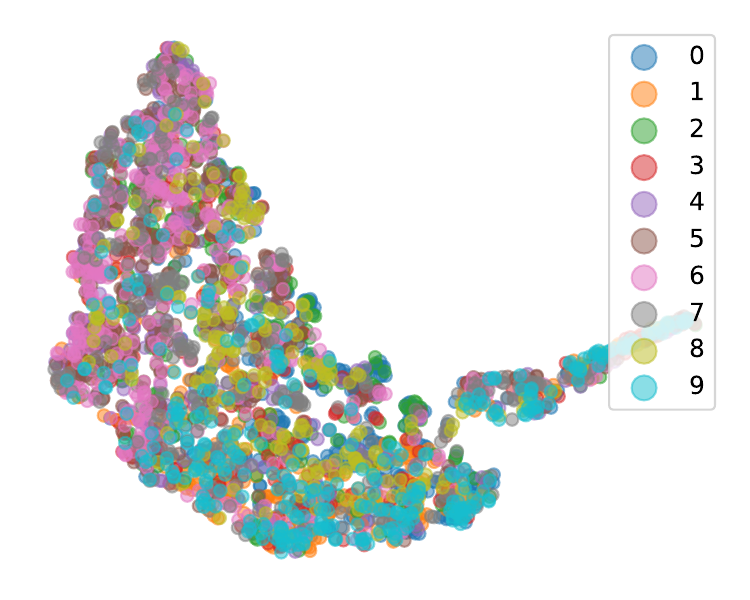}
    \includegraphics[width=0.23\textwidth]{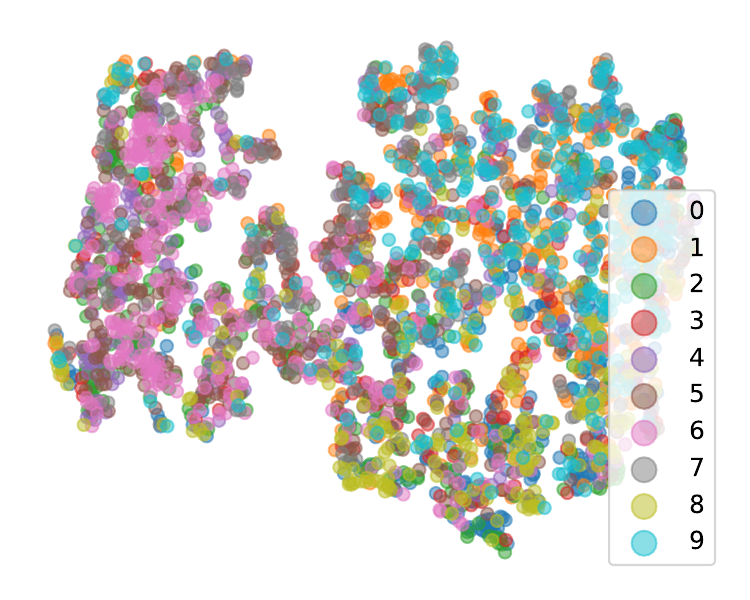}
    \includegraphics[width=0.23\textwidth]{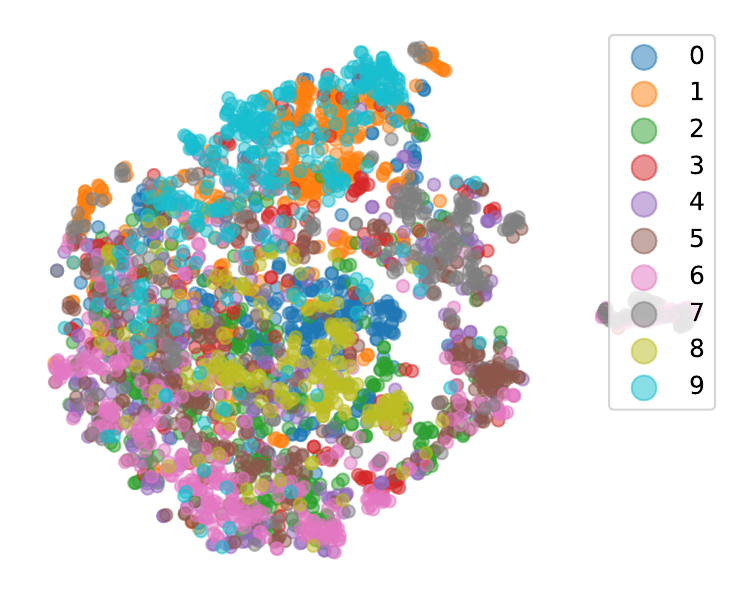}
    \includegraphics[width=0.23\textwidth]{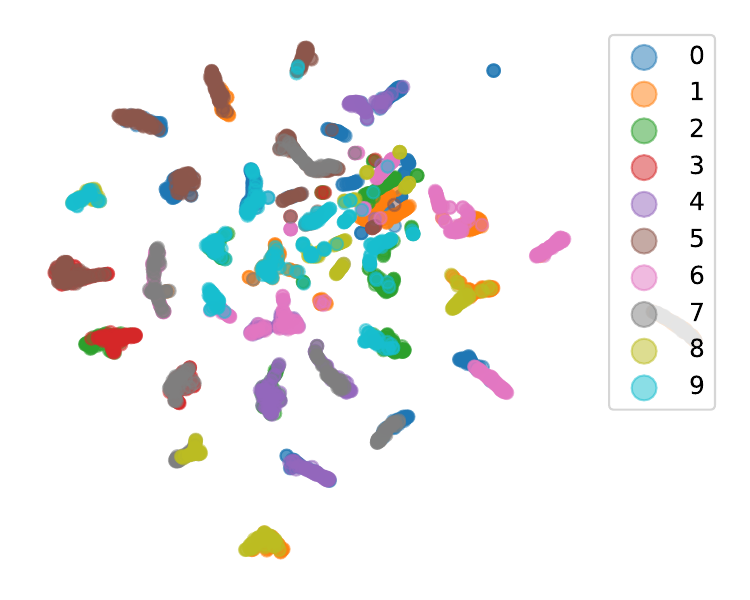}
    \includegraphics[width=0.23\textwidth]{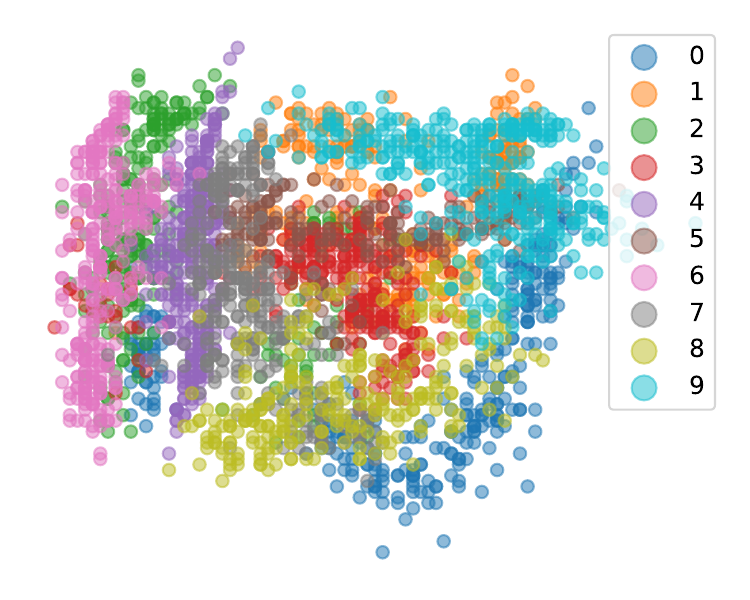}
    \caption{Illustrations of 2D embeddings obtained by utilizing the t-SNE on the learned representations of six methods. This experiment is conducted on the \texttt{CIFAR-10} dataset under the D-non-i.i.d. with the concentration $0.3$. From left top to right bottom, the representations are extracted from each client's local samples based on the encoder trained by FedAvg, FedRep, FedPer, FedBABU, LG-FedAvg and \emph{Calibre} (SimCLR), respectively.}
    \label{fig: cifar2d}
    \vspace{-0.35cm}
    \end{figure} 
    
    \begin{figure}[t]
    \centering
    \includegraphics[width=0.23\textwidth]{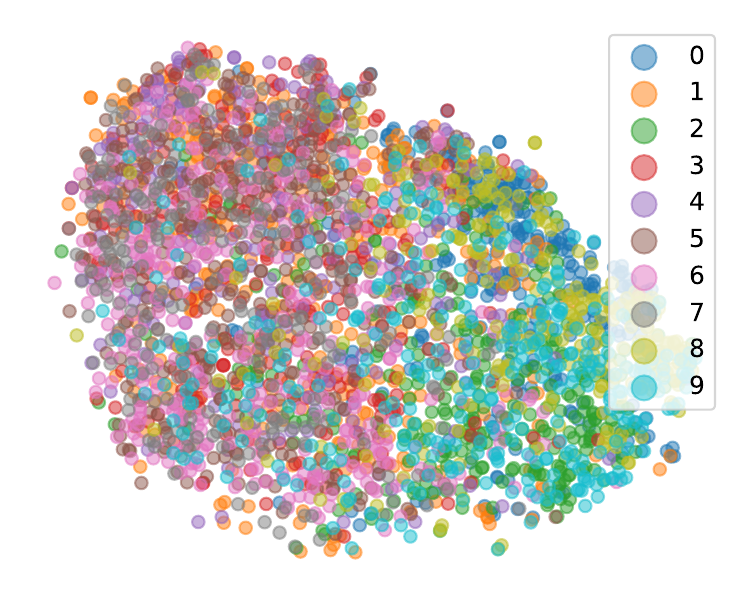}
    \includegraphics[width=0.23\textwidth]{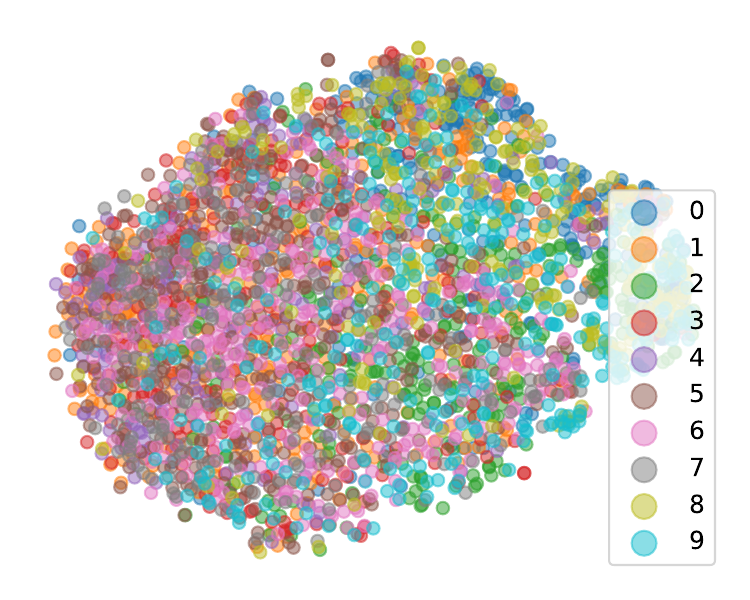}
    \includegraphics[width=0.23\textwidth]{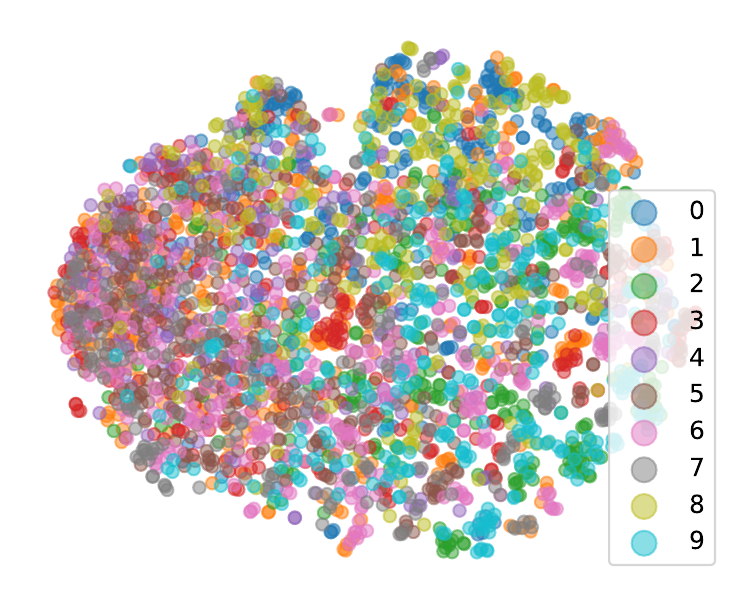}
    \includegraphics[width=0.23\textwidth]{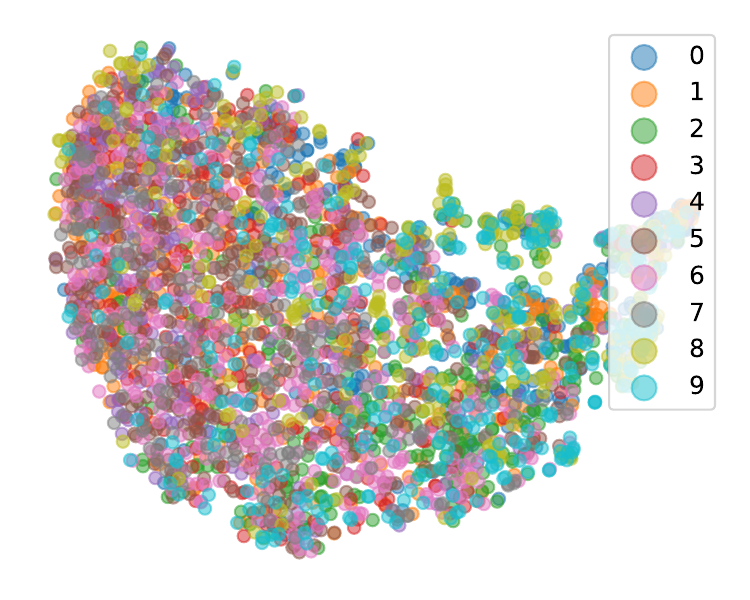}
    \includegraphics[width=0.23\textwidth]{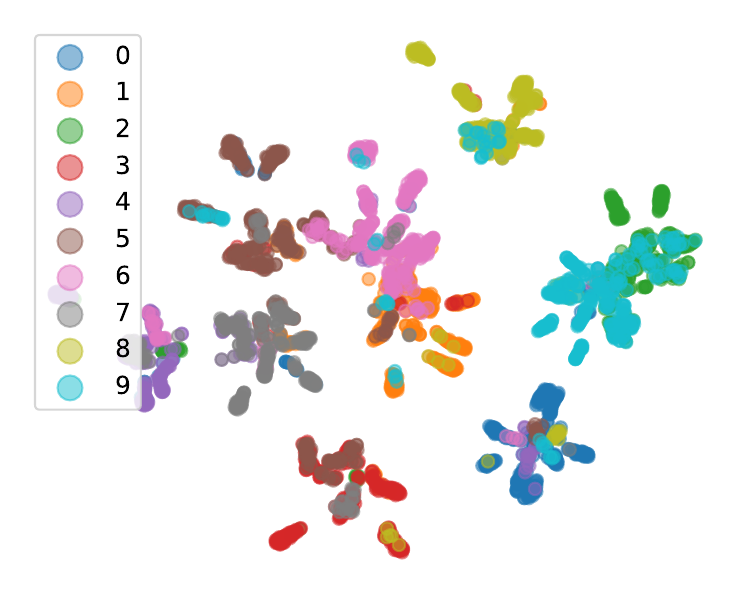}
    \includegraphics[width=0.23\textwidth]{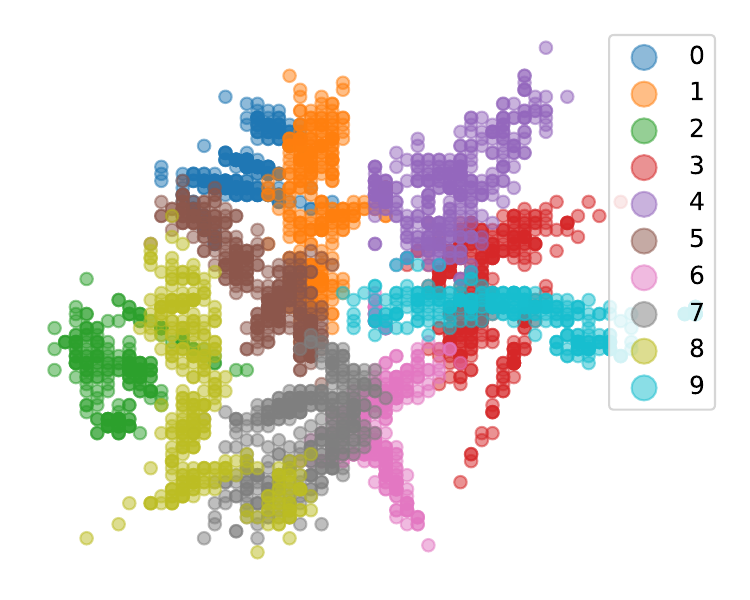}
    \caption{Illustrations of 2D embeddings obtained by utilizing the t-SNE on the learned representations of six methods. This experiment is conducted on the \texttt{STL-10} dataset under the Q-non-i.i.d. with each client assigned $2$ classes. From left top to right bottom, the representations are extracted from each client's local samples based on the encoder trained by FedAvg, FedRep, FedPer, FedBABU, LG-FedAvg and \emph{Calibre} (SimCLR), respectively.}
    \label{fig: stl2d}
    \vspace{-0.35cm}
    \end{figure}

\subsection{Performance on Novel Clients}

As shown in the second column of \cref{Fig. dis}, Calibre is able to achieve fair and accurate personalized learning even on novel clients that are unseen during training. In particular, when considering both the mean and variance of accuracy, Calibre achieves a suitable balance by achieving competitive fairness without compromising model performance. For example, the mean accuracy of \emph{Calibre} (SimCLR) outperforms the FedBABU by $2.2\%$ and $9.6\%$ in \texttt{CIFAR-10} and \texttt{CIFAR-100}, respectively. Meanwhile,  the corresponding variance on the \texttt{CIFAR-10} dataset for \emph{Calibre} (SimCLR) is $0.01$ lower than that of FedBABU. Despite the fact that the fairness of Calibre lags behind FedBABU by $0.0045\%$, its mean accuracy outperforms FedBABU by an even greater margin. A similar conclusion can be drawn when comparing Calibre with other approaches.

The reason for such outstanding performance of Calibre on novel clients is that Calibre trains the model to learn generic information while maintaining the ability to generate clusters for representations, all without depending on any client-specific information. Meanwhile, representations extracted by the trained model naturally contain prototypes of samples, guaranteeing a better class separation. Consequently, the trained global encoder can be readily employed by clients with any data distribution. 

\subsection{Rationale Behind the Superior Performance of \emph{Calibre} (SimCLR)}

Despite the significant improvements observed in SSL approaches calibrated by our \emph{Calibre}, \emph{Calibre} (SimCLR) consistently outperforms other methods by a substantial margin across all settings, as depicted in \cref{Fig. acctovar}, and \cref{Fig. dis}. We contend that the remarkable performance can be attributed to the NT-Xent objective function of SimCLR. This objective function simultaneously measures the inter- and intra-relations of positive and negative samples. By effectively bringing similar samples closer together while pushing dissimilar samples apart, it seamlessly cooperates with our regularizers, namely $L_p$ and $L_n$, to induce more robust prototypes. As a result, the clusters formed by different samples exhibit even clearer boundaries. As a comparison, the objectives of alternative methods, like BYOL, which exclusively utilize cosine similarity for measuring pairwise sample distances, could potentially undermine our desired tradeoff shown by \cref{theo: to_fa}. An illustrative example can be observed in \cref{fig: qua3}, which compares the representations of \emph{Calibre} (SimCLR) and \emph{Calibre} (BYOL). 

Experimental results, shown in Table~\ref{tab: ablastudy}, further present that the objective function of the SSL approach can work well in central learning but may show strong conflict with the regularizers of Calibre. The mean accuracy of \emph{Calibre} (SwAV) and \emph{Calibre} (SMoG) are lower than \emph{Calibre} (SimCLR) by $7.74\%$ and $9.09\%$, respectively. Nevertheless, the corresponding std is $1.35$ and $0.62$ higher. Therefore, although SwAV and SMoG inherently incorporate prototypes into the learning process, their objective cannot cooperate with Calibre. 

\subsection{Ablation Study}

In the ablation study presented in Table~\ref{tab: ablastudy}, we examine the contributions of various components, including the two regularizers $L_n$ and $L_p$, to the performance of Calibre. First, $L_n$ is more critical than $L_p$, as it is computed using the prototypes of encodings $\bm{z}$, which are utilized during personalized learning. Nevertheless, in the case of \emph{Calibre} (SwAV) and \emph{Calibre} (SMoG), where they construct their prototypes within the objective function, the importance of $L_p$ surpasses that of $L_n$. Incorporating $L_n$ into their training process may even have a detrimental effect. Notably, although SwAV and SMoG inherently incorporate prototypes in the learning process, they conflict with $L_n$, resulting in a considerable performance drop. Second, $L_p$ contributes more significantly to fairness. For example, for \emph{Calibre} (SimCLR), \emph{Calibre} (SwAV), and \emph{Calibre} (SMoG), containing $L_p$ in the objective leads to a lower variance. 

\subsection{Qualitative Results}

\cref{fig: qua2} and \cref{fig: qua3} showcase the 2D embeddings of SSL representations after being calibrated by our proposed Calibre. In comparison to representations learned by pFL-SimSiam and pFL-MoCoV2, which exhibit limited generic information and indistinct boundaries, the Calibre representations from multiple clients demonstrate clear clusters with refined class boundaries, as visually depicted in \cref{fig: qua2}. Significantly, when comparing \cref{fig: qua3} with \cref{fig: fuzzy0}, Calibre effectively addresses the issue of fuzzy cluster boundaries across clients discussed in subsection \ref*{prob: fuzzy}. For instance, BYOL learns the relationship between positive and negative samples, but after the application of \emph{Calibre}, it can capture clear cluster information for individual clients. Moreover, as illustrated in \cref{fig: qua3}, Calibre substantially alleviates the problem depicted in \cref{fig: fuzzy1}, which displays fuzzy cluster boundaries within each client.

Lastly, to facilitate a direct comparison of representations, we analyze the 2D embeddings of FedAvg, FedRep, FedPer, FedBABU, LG-FedAvg, and \emph{Calibre} (SimCLR) using samples from the \texttt{CIFAR-10} and \texttt{STL-10} datasets. As shown in \cref{fig: cifar2d} and \cref{fig: stl2d}, it is obvious that representations of \emph{Calibre} (SimCLR) consistently present clear clusters, which results in better boundaries for easier class separation. 

\section{Concluding Remarks}
\label{sec:concl}

In this paper, we focused on personalized federated learning and have thoroughly investigated how a fair model performance across clients can be achieved while maximizing the average training performance. Our objective for designing \emph{Calibre}, our new personalized federated learning framework, was to adopt self-supervised learning (SSL) in the training stage to train a global model that could generalize well to individual clients. However, we empirically found that although this model benefits model fairness across clients, its average accuracy is even worse. It turned out that the root cause for this observation was the lack of class separation information in the representations extracted by the global model. After a comprehensive theoretical analysis, we presented a theorem, called \emph{generality-personalization tradeoff}, to include cluster information in the representation of SSL. With these insights, we proposed a new contrastive prototype adaptation mechanism that was able to improve the mean accuracy while maintaining a uniform accuracy across clients (model fairness). We showed convincing results from a wide array of experiments that \emph{Calibre} achieves higher model fairness, maintains better mean accuracy, and shows more consistent performance on multiple non-i.i.d.~data scenarios than its state-of-the-art alternatives in the literature. 

\bibliographystyle{IEEEtran}
\balance
\bibliography{IEEEabrv,main}

\end{document}